\newtheorem{theorem}{Theorem}
\newtheorem{lemma}[theorem]{Lemma}
\newtheorem{definition}[theorem]{Definition}
\newtheorem{proposition}[theorem]{Proposition}
\title{Supervised Learning for Dynamical System Learning}
\author{
Ahmed Hefny\\
Carnegie Mellon University\\
Pittsburgh, PA 15213 \\
\texttt{ahefny@cs.cmu.edu} \\
\And
Carlton Downey\\
Carnegie Mellon University\\
Pittsburgh, PA 15213 \\
\texttt{cmdowney@cs.cmu.edu} 
\And
Geoffrey J. Gordon\\
Carnegie Mellon University\\
Pittsburgh, PA 15213 \\
\texttt{ggordon@cs.cmu.edu} \\
}
\begin{document}

\maketitle

\begin{abstract}
  Recently there has been substantial interest in spectral methods for learning dynamical systems. These methods are popular
  since they often offer a good tradeoff between computational 
  and statistical efficiency. 
  Unfortunately, they can be difficult to use and extend
  in practice: e.g., they can make it difficult to incorporate prior
  information such as sparsity or structure.
  To address this problem, we present a new view of dynamical
  system learning: we show how to learn dynamical
  systems by solving a sequence of
  ordinary supervised learning problems, thereby allowing
  users to incorporate prior
  knowledge via standard techniques such as $L_1$ regularization.
  Many existing spectral methods are special cases of this new framework, using linear regression as the supervised learner.
  We demonstrate the effectiveness of our framework by
  showing examples where nonlinear
  regression or lasso let us learn better state representations than
  plain linear regression does; the correctness of these instances follows directly from our general analysis.
\end{abstract}

\def \prob {\mathrm{Pr}}
\def \E {\mathbb{E}}
\def \Pr {\mathrm{Pr}}
\def \tr {\mathrm{tr}}
\def \efs {\eta_{\delta,N}}
\def \eall {\gamma_{\delta,N}}
\def \ecov {\zeta_{\delta,N}}
\def \bxx {{\bar{x}\bar{x}}}
\def \bxy {{\bar{x}\bar{y}}}
\def \ecovxx {\zeta_{\delta,N}^{\bar{x}\bar{x}}}
\def \ecovyy {\zeta_{\delta,N}^{\bar{y}\bar{y}}}
\def \ecovxy {\zeta_{\delta,N}^{\bar{x}\bar{y}}}
\def \ints {\mathbb{N}}
\def \reals {\mathbb{R}}
\def \stat {{\psi}}
\def \fstat {{\xi}}
\def \pstat {{h}}
\def \stattrain {{\mathbf{\Psi}}}
\def \fstattrain {{\mathbf{\Xi}}}
\def \pstattrain {{\mathbf{H}}}
\def \obstrain {{\mathbf{O}}}

\newcommand{\h}[1]{{\cal #1}}
\newcommand{\kernel}[1]{k_{\cal #1}}
\newcommand{\is}[1]{{\cal I_{#1}}}
\newcommand{\inner}[2]{\langle #1 , #2 \rangle}
\newcommand{\innerh}[3]{{\langle #1 , #2 \rangle}_{\h{#3}}}
\newcommand{\lambdax}[1]{\lambda_{x #1}}
\newcommand{\lambday}[1]{\lambda_{y #1}}
\newcommand{\ux}[1]{\psi_{x #1}}
\newcommand{\uy}[1]{\psi_{y #1}}
\newcommand{\Cov}[1]{{\Sigma_{\Bar{#1} \Bar{#1}}}}
\newcommand{\CCov}[2]{{\Sigma_{\Bar{#1} \Bar{#2}}}}
\newcommand{\ACov}[1]{{\hat{\Sigma}_{\Bar{#1} \Bar{#1}}}}
\newcommand{\ACCov}[2]{{\hat{\Sigma}_{\Bar{#1} \Bar{#2}}}}
\newcommand{\AACov}[1]{{\hat{\Sigma}_{\hat{#1} \hat{#1}}}}
\newcommand{\AACCov}[2]{{\hat{\Sigma}_{\hat{#1} \hat{#2}}}}
\newcommand{\cnorm}[1]{\| #1 \|_{XY}}

\newcommand{\srange}[1]{{\cal R}(\Cov{#1})}
\newcommand{\orange}[1]{{\cal R}^\perp(\Cov{#1})}
\newcommand{\scrange}[1]{\overline{\srange{#1}}}

\section{Introduction}


Likelihood-based approaches to learning dynamical systems, such as
EM~\cite{baum70} and MCMC~\cite{Gilks96}, can be slow and suffer from
local optima. This difficulty has resulted in the development of
so-called ``spectral algorithms''~\cite{hsu:09:hmm},
which rely on factorization of a matrix of observable
moments; these algorithms are often fast, simple, and globally optimal.

Despite these advantages, spectral algorithms fall short in one
important aspect compared to EM and MCMC: the latter two methods are
meta-algorithms or frameworks that offer a clear template for
developing new instances incorporating various forms of prior
knowledge. For spectral algorithms, by contrast, there is no clear
template to go from a set of probabilistic assumptions to an
algorithm.  In fact, researchers often relax model assumptions to make
the algorithm design process easier, potentially discarding valuable
information in the process.

To address this problem, we propose a new framework for dynamical
system learning, using the idea of
instrumental-variable regression~\cite{pearl:00,stock:11:econ} to
transform dynamical system learning to a sequence of ordinary
supervised learning problems.  This transformation allows us to apply
the rich literature on supervised learning to incorporate many types
of prior knowledge.  Our new methods
subsume a variety of existing spectral algorithms as special cases.



The remainder of this paper is organized as follows: first we
formulate the new learning framework (Sec.~\ref{sec:ds}). We then
provide theoretical guarantees for the proposed methods
(Sec.~\ref{sec:theory}).  Finally, we give two examples of how our
techniques let us rapidly design new and useful dynamical system learning
methods by encoding modeling assumptions (Sec.~\ref{sec:exp}).

\section{A framework for spectral algorithms}
\label{sec:ds}

\begin{figure}
\centering
\begin{minipage}{0.45\textwidth}
\centering
\includegraphics[clip=true,width=2in,trim=0 4cm 10cm 0]{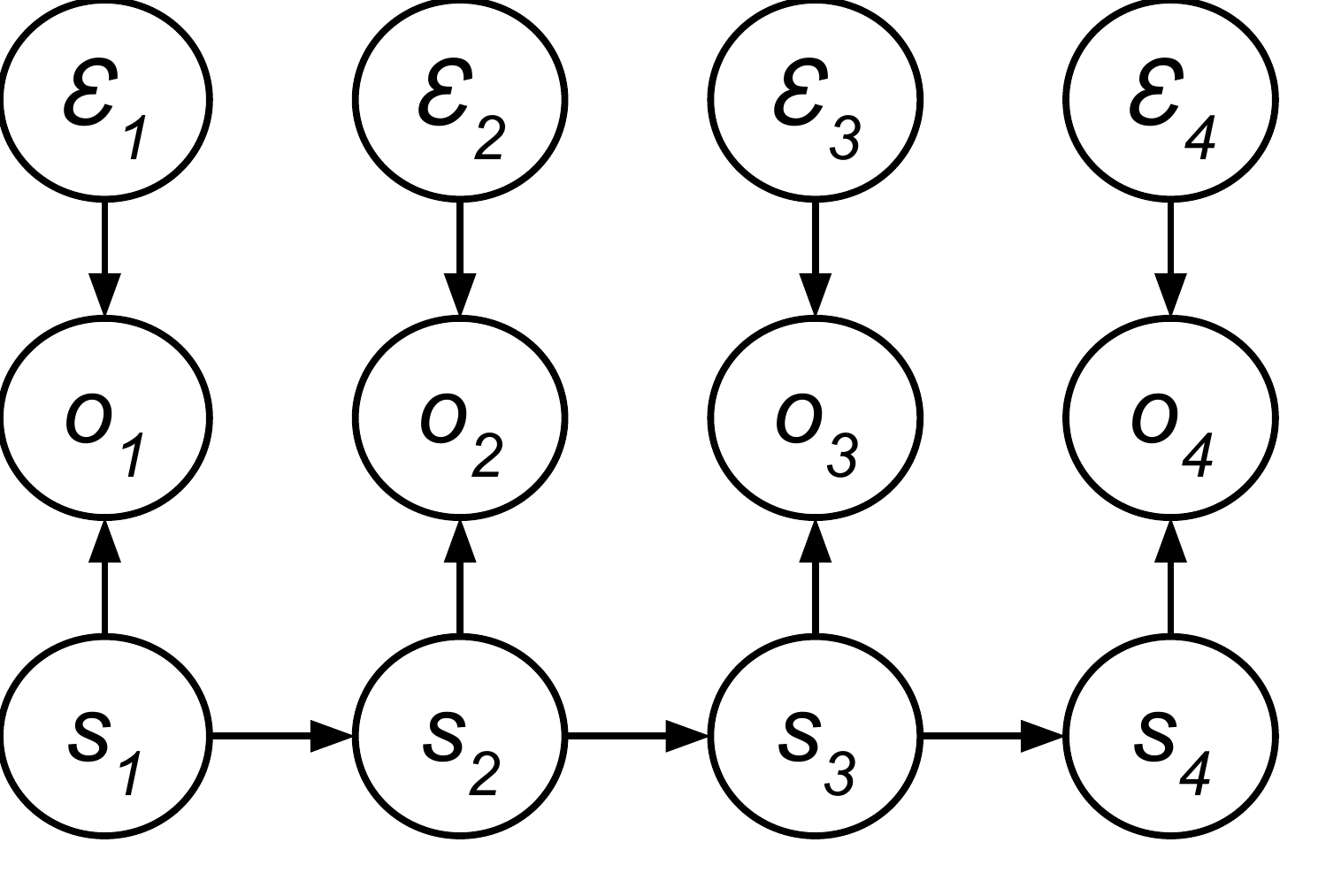}
\caption{A latent-state dynamical system. Observation $o_t$ is determined by latent state $s_t$ and noise $\epsilon_t$.}
\label{fig:ds}
\end{minipage}
\begin{minipage}{0.05\textwidth}
\quad
\end{minipage}
\begin{minipage}{0.45\textwidth}
\centering
\includegraphics[scale=0.35,clip=true,trim=4cm 5.5cm 10cm 1cm]{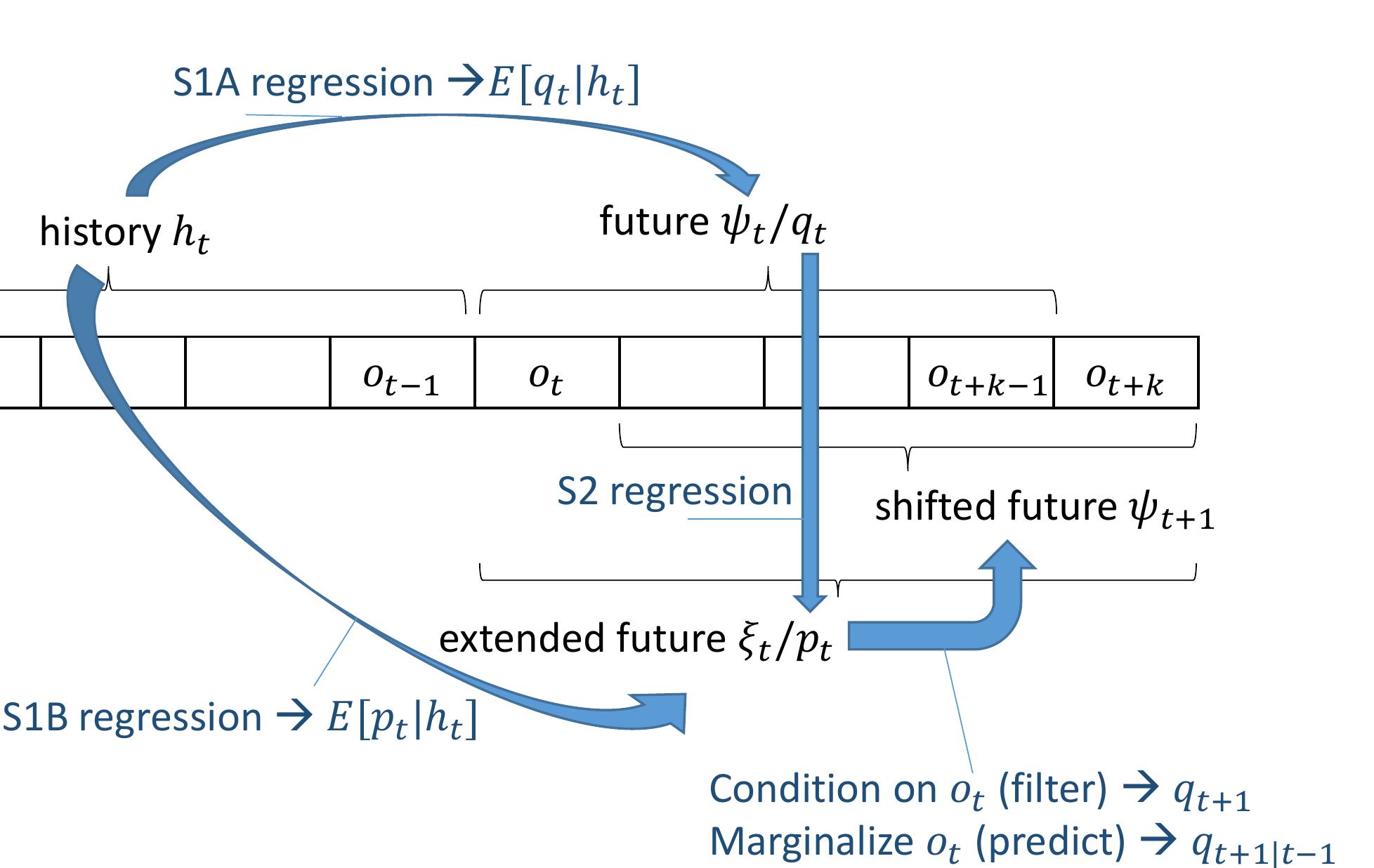}
\caption{Learning and applying a dynamical system with instrumental
  regression.  The predictions from S1 provide training data to S2. At
  test time, we filter or predict using the weights from S2.}
\label{fig:flow}
\end{minipage}
\end{figure}


A dynamical system is a stochastic process (i.e., a distribution over sequences
of observations) such that, at any time, the distribution of
future observations is fully determined by a vector $s_t$ called the \emph{latent state}.
The process is specified by three distributions: the initial state
distribution $P(s_1)$, the state transition distribution
$P(s_{t+1}\mid s_t)$, and the observation distribution $P(o_t\mid
s_t)$.  For later use, we write the observation $o_t$ as a function of
the state $s_t$ and random noise $\epsilon_t$,
as shown in Figure $\ref{fig:ds}$.

Given a dynamical system, one of the fundamental tasks is to perform inference, where we predict future observations given a history of observations.
Typically this is accomplished by maintaining a distribution or
\emph{belief} over states $b_{t\mid t-1} = P(s_t \mid
o_{1:t-1})$ where $o_{1:t-1}$ denotes the first $t-1$
observations. $b_{t\mid t-1}$ represents both our knowledge and our
uncertainty about the true state of the system. 
Two core inference tasks are \emph{filtering} and
\emph{prediction}.\footnote{There are other forms of inference in
  addition to filtering and prediction, such as smoothing and
  likelihood evaluation, but they are outside the scope of this
  paper.}  In filtering, given the current belief $b_t= b_{t\mid t-1}$ and
a new observation $o_t$, we calculate an updated belief $b_{t+1}=b_{t+1\mid
  t}$ that incorporates $o_t$. In prediction, we project our belief into the future: given a
belief $b_{t\mid t-1}$ we estimate $b_{t+k \mid t-1} = P(s_{t+k} \mid
o_{1:t-1})$ for some $k > 0$ (without incorporating any intervening observations).


The typical approach for learning a dynamical system is to explicitly
learn the initial, transition, and observation distributions by
maximum likelihood.  Spectral algorithms offer an alternate approach
to learning: they instead use the method of moments to set up a system
of equations that can be solved in closed form to recover estimates of the
desired parameters.  In this process, they typically factorize a
matrix or tensor of observed moments---hence the name ``spectral.''

Spectral algorithms
often (but not always~\cite{anandkumar2014tensor}) avoid explicitly
estimating the latent state or the initial, transition, or observation
distributions; instead they recover \emph{observable operators} that can
be used to perform filtering and prediction directly.  To do so, they use an
observable representation: instead of maintaining a belief $b_t$ over
states $s_t$, they maintain the expected value of a sufficient
statistic of future observations.  Such a representation is often
called a \emph{(transformed) predictive state}~\cite{citeulike:3980163}.

In more detail, we define
$q_t = q_{t\mid t-1} = \E[\stat_t \mid o_{1:t-1}]$, where
$\stat_t = \stat(o_{t:t+k-1})$ is a vector of \emph{future
  features}. The features are chosen such that $q_t$ determines the
distribution of future observations
$P(o_{t:t+k-1} \mid o_{1:t-1})$.\footnote{For convenience we assume
  that the system is $k$-observable: that is, the distribution of all
  future observations is determined by the distribution of the next
  $k$ observations.  (Note: not by the next $k$ observations themselves.)
  At the cost of additional notation, this restriction could easily be
  lifted.}
Filtering then becomes the process of mapping a predictive state $q_t$
to $q_{t+1}$ conditioned on $o_t$, while prediction maps a predictive
state $q_t=q_{t\mid t-1}$ to
$q_{t+k\mid t-1} = \E[\stat_{t+k} \mid o_{1:t-1}]$ without intervening
observations.

A typical way to derive a spectral method is to select a set of
moments involving $\stat_t$, work out the expected values of these
moments in terms of the observable operators, then invert this
relationship to get an equation for the observable operators in terms
of the moments.  We can then plug in an empirical estimate of the
moments to compute estimates of the observable operators.

While effective, this approach can be statistically inefficient (the
goal of being able to solve for the observable operators is in
conflict with the goal of maximizing statistical efficiency) and can
make it difficult to incorporate prior information (each new source of
information leads to new moments and a different and
possibly harder set of equations to solve).  To address these
problems, we show that we can instead learn the observable operators
by solving three supervised learning problems.

The main idea is that, just as we can represent a belief about a
latent state $s_t$ as the conditional expectation of a vector of
observable statistics, we can also represent any other distributions
needed for prediction and filtering via their own vectors of
observable statistics.  Given such a representation, we can learn to
filter and predict by learning how to map these vectors to one
another.  

In particular, the key intermediate quantity for filtering is the
``extended and marginalized'' belief
$P(o_t,s_{t+1}\mid o_{1:t-1})$---or equivalently
$P(o_{t:t+k}\mid o_{1:t-1})$.  We represent this distribution via a
vector $\fstat_t = \fstat(o_{t:t+k})$ of \emph{features of the
  extended future}. The features are chosen such that the
\emph{extended state} $p_t=\E[\fstat_t \mid o_{1:t-1}]$ determines
$P(o_{t:t+k}\mid o_{1:t-1})$.  Given $P(o_{t:t+k}\mid o_{1:t-1})$,
filtering and prediction reduce respectively to conditioning on and
marginalizing over $o_t$.

In many models (including Hidden Markov Models (HMMs) and
Kalman filters), the extended state $p_t$ is linearly related to the
predictive state $q_t$---a property we exploit for our
framework.  That is,
\begin{align}
p_t = W q_t
\label{eq:moment}
\end{align}
for some linear operator $W$.  For example, in a discrete system $\stat_t$ can be an indicator vector representing the joint assignment of the next $k$ observations, and $\fstat_t$ can be an indicator vector for the next $k+1$ observations. The matrix $W$ is then the conditional probability table $P(o_{t:t+k}\mid o_{t:t+k-1})$. 


Our goal, therefore, is to learn this mapping $W$.  Na\"ively, we
might try to use linear regression for this purpose, substituting
samples of $\stat_t$ and $\fstat_t$ in place of $q_t$ and $p_t$ since
we cannot observe $q_t$ or $p_t$ directly.  Unfortunately, due to the
overlap between observation windows, the noise terms on $\stat_t$ and
$\fstat_t$ are correlated.  So, na\"ive linear regression will give a
biased estimate of $W$.



To counteract this bias, we employ instrumental regression \cite{pearl:00,stock:11:econ}.
Instrumental regression uses \emph{instrumental variables} that are correlated with the input $q_t$
but not with the noise $\epsilon_{t:t+k}$. This property provides a criterion
to denoise the inputs and outputs of the original regression problem:
we remove that part of the input/output that is not correlated with the instrumental variables.
In our case, since past observations $o_{1:t-1}$ do not overlap
with future or extended future windows, they are not correlated with the noise $\epsilon_{t:t+k+1}$, as can be seen in Figure \ref{fig:ds}.
Therefore, we can use \emph{history features} $\pstat_t = \pstat(o_{1:t-1})$ as instrumental variables.


In more detail, by taking the expectation of \eqref{eq:moment} given
$h_t$, we obtain an instrument-based moment condition: for all $t$,
\begin{align}
\nonumber \E [p_t \mid h_t] & = \E [W q_t \mid h_t] \\
\nonumber \E [ \E [\fstat_t \mid o_{1:t-1}] \mid h_t] & = W \E [ \E [\stat_t \mid o_{1:t-1}] \mid h_t]\\
\E [\fstat_t \mid h_t] & = W \E [\stat_t \mid h_t]
\label{eq:ins_moment}
\end{align}
Assuming that there are enough independent dimensions in $h_t$ that
are correlated with $q_t$, we maintain the rank of the moment
condition when moving from \eqref{eq:moment} to \eqref{eq:ins_moment},
and we can recover $W$ by least squares regression if we can compute $\E [\stat_t \mid h_t]$ and
$\E [\fstat_t \mid h_t]$ for sufficiently many examples $t$.

\todo{GG: I removed the footnote about it not being necessary to
  denoise both input and output; it's interesting but perhaps belongs
  in some discussion in an appendix.}

Fortunately, conditional expectations such as $\E [\stat_t \mid h_t]$
are exactly what supervised learning algorithms are designed to
compute.  So, we arrive at our learning framework: we first use
supervised learning to estimate $\E [\stat_t \mid h_t]$ and
$\E [\fstat_t \mid h_t]$, effectively \emph{denoising} the training
examples,
and then use these estimates to compute $W$ by finding the least
squares solution to \eqref{eq:ins_moment}.


In summary, learning and inference of a dynamical system through instrumental regression can be described as follows:

\begin{compactitem}
\item \textbf{Model Specification:} Pick features of history $\pstat_t = \pstat(o_{1:t-1})$, future $\stat_t = \stat(o_{t:t+k-1})$
and extended future $\fstat_t = \fstat(o_{t:t+k})$. 
$\stat_t$ must be a sufficient statistic for $\mathbb{P}(o_{t:t+k-1} \mid o_{1:t-1})$.
$\fstat_t$ must satisfy 
\begin{itemize}
\item $\E[\stat_{t+1} \mid o_{1:t-1}] = f_{\rm predict}(\E[\fstat_t
  \mid o_{1:t-1}])$ for a known function $f_{\rm predict}$.
\item $\E[\stat_{t+1} \mid o_{1:t}] = f_{\rm filter}(\E[\fstat_t
 \mid o_{1:t-1}], o_t)$ for a known function $f_{\rm filter}$.
\end{itemize}
\item \textbf{S1A (Stage 1A) Regression:} Learn a (possibly non-linear) regression model to estimate $\bar{\stat}_t = \E[\stat_t \mid h_t]$.
The training data for this model are $(\pstat_t, \stat_t)$
across time steps $t$.%
\footnote{Our bounds assume that the training time steps $t$ are
  sufficiently
  spaced for the underlying process to mix, but in practice, the error
  will only get smaller if we consider all time steps $t$.} 
\item \textbf{S1B Regression:} Learn a (possibly non-linear) regression model to estimate $\bar{\fstat}_t = \E[\fstat_t \mid h_t]$.
The training data for this model are $(\pstat_t, \fstat_t)$
across time steps $t$.
\item \textbf{S2 Regression:} Use the feature expectations estimated
  in S1A and S1B to train a model to predict
  $\bar{\fstat}_t = W \bar{\stat}_t$, where $W$ is a linear operator.
  The training data for this model are estimates of $(\bar{\stat}_t,
  \bar{\fstat}_t)$ obtained from S1A and S1B across time steps $t$.
\item \textbf{Initial State Estimation:} Estimate an initial state $q_{1} = \E[\stat_1]$ by averaging $\stat_1$
across several example realizations of our time
series.%
\footnote{Assuming ergodicity, we
  can set the initial state to be the empirical average vector of
  future features in a single long sequence, $\frac{1}{T}\sum_{t=1}^T\stat_t$.} 
\item \textbf{Inference:} Starting from the initial state $q_{1}$, we can maintain 
the predictive state $q_t = \E[\stat_t \mid o_{1:t-1}]$ through filtering:
given $q_t$ we compute $p_t = \E[\fstat_{t} \mid o_{1:t-1}] = W q_t$.
Then, given the observation $o_t$, we can compute $q_{t+1} = f_{\rm filter}(p_t, o_t)$. 
Or, in the absence of $o_t$, we can predict the next state
$q_{t+1\mid t-1} = f_{\rm predict}(p_t)$.
Finally, by definition, the predictive state $q_t$ is sufficient to
compute $\mathbb{P}(o_{t:t+k-1} \mid o_{1:t-1})$.\footnote{It might
  seem reasonable to learn $q_{t+1} = f_{\rm combined}(q_{t}, o_{t})$ directly,
  thereby avoiding the need to separately estimate $p_t$ and condition
  on $o_t$.  Unfortunately, 
  $f_{\rm combined}$ is nonlinear for common models such as HMMs.}
\end{compactitem}
The process of learning and inference is depicted in Figure \ref{fig:flow}.
Modeling assumptions are reflected in the choice of the statistics $\stat$, $\fstat$ and $\pstat$
as well as the regression models in stages S1A and S1B\@. 
Table \ref{tbl:examples} demonstrates that we can recover existing spectral algorithms for dynamical system learning
using linear S1 regression.  In addition to providing a unifying view
of some successful learning algorithms, the
new framework also paves the way for extending these algorithms in a
theoretically justified manner, as we demonstrate in the experiments below.


\begin{table}
\scriptsize
\centering
\begin{tabular}{|p{2cm}|p{4cm}|p{2.5cm}|p{4cm}|}
\hline
 Model/Algorithm & future features $\stat_t$ & extended future features $\fstat_t$ & $f_{\rm filter}$    \\
 \hline
 Spectral Algorithm for HMM \cite{hsu:09:hmm} & $U^\top e_{o_t}$ where $e_{o_t}$ is an indicator vector and $U$ spans the range of $q_t$ (typically the top $m$ left singular vectors
 of the joint probability table $P(o_{t+1},o_{t})$) & $U^\top e_{o_{t+1}} \otimes e_{o_t}$ & Estimate a state normalizer from S1A
 output states. \\
 \hline
 SSID for Kalman filters (time dependent gain)  & $x_t$ and $x_t \otimes x_t$, where $x_t = U^\top o_{t:t+k-1}$ for a matrix $U$ that spans the range of $q_t$ (typically the top $m$ left singular vectors
 of the covariance matrix
 $\mathrm{Cov}(o_{t:t+k-1},o_{t-k:t-1})$) & $y_t$ and $y_t \otimes y_t$, where $y_t$ is formed by stacking $U^\top o_{t+1:t+k}$ and $o_t$.
 & $p_t$ specifies a Gaussian distribution where conditioning on $o_t$ is straightforward. \\
 \hline
 SSID for stable Kalman filters (constant gain) & $U^\top o_{t:t+k-1}$ ($U$ obtained as above) & $o_t$ and $U^\top o_{t+1:t+k}$ & Estimate steady-state covariance by solving Riccati equation \cite{vanoverschee:96}. $p_t$ together with the steady-state covariance specify a Gaussian distribution where conditioning on $o_t$ is straightforward. \\
 \hline
 Uncontrolled HSE-PSR \cite{boots:13:hsepsr} & Evaluation functional $k_s(o_{t:t+k-1},.)$ for a characteristic kernel $k_s$ 
 & $k_o(o_t,.) \otimes k_o(o_t,.)$ and 
 $\stat_{t+1} \otimes k_o(o_t,.)$ & Kernel Bayes rule \cite{fukumizu:13:kbr}. \\
 \hline
\end{tabular}
\caption{Examples of existing spectral algorithms reformulated as two-stage instrument regression with linear S1 regression. Here $o_{t_1:t_2}$ is a vector formed by stacking observations $o_{t_1}$ through $o_{t_2}$ and $\otimes$ denotes the outer product.
Details and derivations can be found in the supplementary material.}
\label{tbl:examples}
\end{table}

\section{Related Work}
\label{sec:related}
This work extends predictive state learning algorithms for dynamical systems, which include
spectral algorithms for Kalman filters \cite{boots:12:thesis}, Hidden
Markov Models \cite{hsu:09:hmm,siddiqi:10:rrhmm},  Predictive State
Representations (PSRs) \cite{boots:11:plan,boots:11b:onlinepsr} and
Weighted Automata \cite{balle:14}.  It also extends kernel variants
such as \cite{boots:13:hsepsr}, which builds on
\cite{song:10:hsehmm}.  
All of the above work effectively uses linear regression or linear
ridge regression (although not always in an obvious way).  


One common aspect of predictive state learning algorithms is that they exploit the covariance structure between 
future and past observation sequences to obtain an unbiased observable state representation.
Boots and Gordon \cite{boots:12:2man} note the connection between this
covariance and (linear) instrumental regression in the context of the HSE-HMM\@. We use this connection
to build a general framework for dynamical system learning where the
state space can be identified using arbitrary (possibly nonlinear) supervised learning methods.
This generalization lets us incorporate
prior knowledge to learn compact or regularized models; our
experiments demonstrate that this flexibility lets us take better
advantage of limited data.

Reducing the problem of learning dynamical systems with latent state to supervised learning 
bears similarity to Langford et al.'s sufficient posterior representation (SPR) \cite{langford:09},
which encodes the state by the sufficient statistics of the conditional distribution of the next observation
and represents system dynamics by three vector-valued functions that are estimated using supervised learning approaches.
While SPR allows all of these functions to be non-linear,
it involves a rather complicated training procedure involving multiple iterations of model refinement and 
model averaging, whereas our framework only requires solving three
regression problems in sequence. 
In addition, the theoretical analysis of \cite{langford:09} only
establishes the consistency of SPR learning assuming that 
all regression steps are solved perfectly. Our work, on the other hand, establishes convergence rates based on the performance 
of S1 regression.

\section{Theoretical Analysis}
\label{sec:theory}
In this section we present error bounds for two-stage instrumental
regression.  These bounds hold
regardless of the particular S1 regression method used,
assuming that the S1 predictions converge to the true conditional expectations.
The bounds imply that our overall method is consistent.


Let $(x_t, y_t, z_t)\in(\h{X},\h{Y},\h{Z})$ be i.i.d.\ triplets of
input, output, and instrumental variables.
(Lack of independence will result in slower
convergence in proportion to the mixing time of our process.)
Let $\bar{x}_t$ and $\bar{y}_t$ denote 
$\E[x_t \mid  z_t]$ and $\E[y_t \mid  z_t]$. And, let $\hat{x}_t$ and $\hat{y}_t$ denote 
$\hat{\E}[x_t \mid  z_t]$ and $\hat{\E}[y_t \mid  z_t]$ as estimated by the S1A
and S1B regression steps.
Here $\bar{x}_t, \hat{x}_t \in \h{X}$ and $\bar{y}_t, \hat{y}_t \in \h{Y}$.

We want to analyze the convergence of the output of S2
regression---that is, of the weights $W$ given by ridge regression
between S1A outputs and S1B outputs:
\begin{align}
\hat{W}_\lambda = \left(\sum_{t=1}^T \hat{y}_t \otimes \hat{x}_t \right) \left(\sum_{t=1}^T \hat{x}_t \otimes \hat{x}_t + \lambda I_\h{X}\right)^{-1}
\label{eq:what}
\end{align}
Here $\otimes$ denotes tensor (outer) product, and 
$\lambda > 0$ is a regularization parameter that ensures the invertibility of the estimated covariance.

Before we state our main theorem we need to quantify the quality of S1 regression
in a way that is independent of the S1 functional form.  To do so, we
place a bound on the S1 error, and assume that this bound converges to
zero: given the definition below, 
for each fixed $\delta$,
$\lim_{N \to \infty} \efs = 0$.


\begin{definition}[S1 Regression Bound]
For any $\delta > 0$ and $N \in \ints^+$, 
the S1 regression bound $\efs > 0$
is a number such that,
with probability at least $(1 - \delta/2)$,
for all $1 \leq t \leq N$:
\begin{align*}
\|\hat{x}_t - \bar{x}_t\|_\h{X} & < \efs \\
\|\hat{y}_t - \bar{y}_t\|_\h{Y} & < \efs 
\end{align*}
\label{thm:efs}
\end{definition}\label{thm:consistent_s1}
%
%
%
\vspace{-6mm} In many applications, $\h{X}$, $\h{Y}$ and $\h{Z}$ will be finite
dimensional real vector spaces: $\mathbb{R}^{d_x}$, $\mathbb{R}^{d_y}$
and $\mathbb{R}^{d_z}$.  However, for generality we state our results
in terms of arbitrary reproducing kernel Hilbert spaces.  In this case
S2 uses kernel ridge regression, leading to methods such as HSE-PSRs.
For this purpose, let $\Cov{x}$ 
and $\Cov{y}$ 
denote the (uncentered) covariance operators of 
$\bar{x}$ and $\bar{y}$ respectively: $
\Cov{x} = \E[\bar{x} \otimes \bar{x}],\ \Cov{y} = \E[\bar{y} \otimes \bar{y}]
$.
And, let $\scrange{x}$ denote the closure of the range of $\Cov{x}$.

\todo{AH: the papers don't talk much about the existence of the covariance operator. 
I think it follows from Reisz representation theorem with additional arguments about the boundedness of the 
covariance, which implies continuity and hence linearity on the completion of tensor products.}

With the above assumptions, Theorem~\ref{thm:main_short} gives a
generic error bound on S2 regression in terms of S1 regression.  If
$\h{X}$ and $\h{Y}$ are finite dimensional and $\Cov{x}$ has full
rank, then using ordinary least squares (i.e., setting $\lambda = 0$)
will give the same bound, but with $\lambda$ in the first two terms
replaced by the minimum eigenvalue of $\Cov{x}$, and the last term dropped.


\begin{theorem}
Assume that $\|\bar{x}\|_{\h{X}}, \|\bar{x}\|_{\h{Y}} < c < \infty$ almost
surely.
Assume $W$ is a Hilbert-Schmidt operator, and let $\hat{W}_\lambda$
be as defined in \eqref{eq:what}.
Then, with probability at least $1 - \delta$,
for each $x_{\rm test} \in \scrange{x}$ s.t.\ $\|x_{\rm test}\|_{\h{X}}
\leq 1$, the error $\|\hat{W}_\lambda {x}_{\rm test} - W {x}_{\rm
  test}\|_\h{Y}$ is bounded by
\todo{AH: The constant depends on $\|\Cov{x}^{-1/2} \bar{x}_{test}\|$. Thus used "for each".
Note though that we don't need the union bound for a finite set of points.
Also, with some upper bound assumption on $\|\Cov{x}^{-1/2} \bar{x}_{test}\|$, we get a uniform result (see the proof of the regularization part)}
\begin{align*}
\quad \underbrace{O\left(\efs \left(\frac{1}{\lambda} + \frac{\sqrt{1 + \sqrt{\frac{\log(1/\delta)}{N}}}}{\lambda^\frac{3}{2}} \right)\right)}_{\rm error\ in\ S1\ regression}
  + \underbrace{O\left(\frac{\log(1/\delta)}{\sqrt{N}} \left(\frac{1}{\lambda} + \frac{1}{\lambda^\frac{3}{2}} \right)\right)}_{\rm error\ from\ finite\ samples} 
  + \underbrace{O\left( \sqrt{\lambda} \right)}_{\rm error\ from\ regularization} 
\end{align*}
\label{thm:main_short}
\end{theorem}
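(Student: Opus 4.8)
The plan is to bound the test error by a three-way triangle-inequality decomposition that isolates exactly the three error sources named in the statement. Rewriting \eqref{eq:what} in terms of the empirical (averaged) second-moment operators, the estimator is $\hat{W}_\lambda = \AACCov{y}{x}(\AACov{x}+\lambda I)^{-1}$. I introduce two reference operators: the \emph{denoised empirical} solution $\tilde{W}_\lambda = \ACCov{y}{x}(\ACov{x}+\lambda I)^{-1}$, which uses the true conditional expectations $\bar{x}_t,\bar{y}_t$ in place of the S1 estimates, and the \emph{population} solution $W_\lambda = \CCov{y}{x}(\Cov{x}+\lambda I)^{-1}$. Then for any admissible $x_{\rm test}$,
\begin{align*}
\|\hat{W}_\lambda x_{\rm test} - W x_{\rm test}\|_\h{Y} \le \|(\hat{W}_\lambda - \tilde{W}_\lambda)x_{\rm test}\|_\h{Y} + \|(\tilde{W}_\lambda - W_\lambda)x_{\rm test}\|_\h{Y} + \|(W_\lambda - W)x_{\rm test}\|_\h{Y},
\end{align*}
and I will show these three summands are controlled by the S1, finite-sample, and regularization terms respectively. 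The structural fact driving everything is the instrument moment condition \eqref{eq:ins_moment}, i.e.\ $\bar{y}_t = W\bar{x}_t$; taking expectations gives $\CCov{y}{x} = W\Cov{x}$ and, empirically, $\ACCov{y}{x} = W\ACov{x}$, which is exactly what makes instrumental regression unbiased and lets me pull $W$ out of the cross-covariances.

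The regularization term is purely algebraic: from $\CCov{y}{x}=W\Cov{x}$ I get $W_\lambda - W = -\lambda W(\Cov{x}+\lambda I)^{-1}$. Writing $x_{\rm test} = \Cov{x}^{1/2}u$ (possible since $x_{\rm test}\in\scrange{x}$) and using the scalar bound $\sup_{s\ge 0}\lambda s^{1/2}/(s+\lambda) = \frac{1}{2}\sqrt{\lambda}$ to control $\|\lambda(\Cov{x}+\lambda I)^{-1}\Cov{x}^{1/2}\|$ yields the $O(\sqrt{\lambda})$ bound, with $\|W\|$ (Hilbert--Schmidt) and $\|u\| = \|\Cov{x}^{-1/2}x_{\rm test}\|$ absorbed into the constant. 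For the finite-sample term I apply the resolvent identity $(\ACov{x}+\lambda I)^{-1} - (\Cov{x}+\lambda I)^{-1} = (\ACov{x}+\lambda I)^{-1}(\Cov{x}-\ACov{x})(\Cov{x}+\lambda I)^{-1}$ and split $\tilde{W}_\lambda - W_\lambda$ into a cross-covariance-deviation piece and a covariance-deviation piece. Each deviation $\|\ACCov{y}{x}-\CCov{y}{x}\|$, $\|\ACov{x}-\Cov{x}\|$ is an average of i.i.d.\ operators bounded (via $\|\bar{x}\|,\|\bar{y}\|<c$), so a Hilbert-space Hoeffding/Bernstein inequality bounds it by $O(\log(1/\delta)/\sqrt{N})$; combining with $\|(\cdot+\lambda I)^{-1}\|\le 1/\lambda$ for the first piece and the range bound $\|(\Cov{x}+\lambda I)^{-1}\Cov{x}^{1/2}\| = O(\lambda^{-1/2})$ (after substituting $\CCov{y}{x}=W\Cov{x}$) for the second gives the $\log(1/\delta)N^{-1/2}(\lambda^{-1}+\lambda^{-3/2})$ term.

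The S1 term is handled analogously, decomposing $\hat{W}_\lambda - \tilde{W}_\lambda = (\AACCov{y}{x}-\ACCov{y}{x})(\AACov{x}+\lambda I)^{-1} + \ACCov{y}{x}(\AACov{x}+\lambda I)^{-1}(\ACov{x}-\AACov{x})(\ACov{x}+\lambda I)^{-1}$. Here the Definition~\ref{thm:efs} bound $\|\hat{x}_t-\bar{x}_t\|,\|\hat{y}_t-\bar{y}_t\|<\efs$ (uniformly in $t$, w.p.\ $\ge 1-\delta/2$), together with $\|\bar{x}_t\|,\|\bar{y}_t\|<c$, gives $\|\AACCov{y}{x}-\ACCov{y}{x}\|,\|\AACov{x}-\ACov{x}\| = O(\efs)$. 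The first piece contributes $O(\efs/\lambda)$. For the second I substitute $\ACCov{y}{x}=W\ACov{x}$, bound $\|\ACov{x}(\AACov{x}+\lambda I)^{-1}\|\le\|\ACov{x}\|/\lambda = O(1/\lambda)$, and send $x_{\rm test}=\Cov{x}^{1/2}u$ through $(\ACov{x}+\lambda I)^{-1}$; this last step compares the \emph{empirical} resolvent to the population one via $\|(\ACov{x}+\lambda I)^{-1/2}(\Cov{x}+\lambda I)^{1/2}\|^2 \le 1 + \|\ACov{x}-\Cov{x}\|/\lambda$, which upon invoking the concentration bound for $\|\ACov{x}-\Cov{x}\|$ produces the $\sqrt{1+\sqrt{\log(1/\delta)/N}}$ factor and an extra $\lambda^{-1/2}$, for a total of $O(\efs\lambda^{-3/2}\sqrt{1+\sqrt{\log(1/\delta)/N}})$. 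Finally a union bound over the S1 event ($\delta/2$) and the concentration events (jointly $\delta/2$) gives overall probability $1-\delta$.

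I expect the main obstacle to be the rigorous empirical-to-population resolvent comparison in the general RKHS setting: justifying $x_{\rm test}=\Cov{x}^{1/2}u$ on the closed range $\scrange{x}$, controlling the square-root and resolvent factorizations (existence and boundedness of $\Cov{x}^{1/2}$, $\ACov{x}^{1/2}$ and their interplay), and arranging the operator-concentration inequalities to hold simultaneously with the claimed $N^{-1/2}$ and $\efs$ rates. The cancellations $\CCov{y}{x}=W\Cov{x}$ and $\ACCov{y}{x}=W\ACov{x}$ are precisely what prevent these factorizations from pushing the powers of $\lambda$ past $\lambda^{-3/2}$.
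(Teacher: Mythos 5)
Your proposal is correct and takes essentially the same route as the paper: the identical three-term decomposition through the denoised-empirical estimator $\ACCov{y}{x}(\ACov{x}+\lambda I)^{-1}$ and the population solution $\CCov{y}{x}(\Cov{x}+\lambda I)^{-1}$, operator Bernstein bounds for the covariance deviations, resolvent identities for the perturbation terms, and an $O(\sqrt{\lambda})$ regularization bound on $\srange{x}$ extended to $\scrange{x}$ via boundedness of $W$. The only tactical differences are that you control the cross-covariance by pulling $W$ out through the moment condition $\CCov{y}{x}=W\Cov{x}$ where the paper uses the generic factorization $\CCov{y}{x}=\Cov{y}^{1/2}V\Cov{x}^{1/2}$ with a correlation operator $\|V\|\leq 1$, and your regularization step is a direct operator-norm estimate of $\lambda W(\Cov{x}+\lambda I)^{-1}\Cov{x}^{1/2}$ rather than the paper's eigenbasis Hilbert--Schmidt computation; both yield the same rates.
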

\todo{GG: I'm worried that reviewers will not like how much we defer
  to the supplementary material.  Not sure how to fix that, though\ldots}
\todo{GG: it says we defer a finite sample analysis to the appendix,
  but theorem 2 seems to say something about behavior on finite
  samples?  It would be much better if we didn't have to say this was
  deferred.}
\todo{GG: I don't understand the effect of decomposing $\hat x_{\rm
    test}$.  It looks restrictive that $\epsilon$ has to go to 0.}

\vspace{-5mm} We defer the proof to the supplementary material.  The supplementary
material also provides explicit finite-sample bounds (including
expressions for the constants hidden by $O$-notation), as well as
concrete examples of S1 regression bounds $\efs$ for practical
regression models.


Theorem \ref{thm:main_short} assumes that $x_{\rm test}$ is in
$\scrange{x}$.  For dynamical systems, all valid states satisfy this
property.  However, with finite data, estimation errors may cause the
estimated state $\hat{q}_t$ (i.e., $x_{\rm test}$) to have a non-zero
component in $\orange{x}$.  Lemma~\ref{thm:orth_state_bound} bounds
the effect of such errors: it states that, in a stable system, this
component gets smaller as S1 regression performs better.  The main
limitation of Lemma~\ref{thm:orth_state_bound} is the assumption that
$f_{\rm filter}$ is $L$-Lipchitz, which essentially means that the
model's estimated probability for $o_t$ is bounded below.  There is
no way to guarantee this property in practice; so,
Lemma~\ref{thm:orth_state_bound} provides suggestive evidence rather
than a guarantee that our learned dynamical system will predict well.

\begin{lemma}
For observations $o_{1:T}$, 
let $\hat{q}_t$ be the estimated state given $o_{1:t-1}$.
Let $\tilde{q}_t$ be the projection of $\hat{q}_t$ onto $\scrange{x}$.
Assume $f_{\rm filter}$ is $L$-Lipchitz on $p_t$
when evaluated at $o_t$,
and $f_{\rm filter}(p_t, o_t) \in \scrange{x}$ for any $p_t \in \scrange{y}$.
Given the assumptions of theorem $\ref{thm:main_short}$
and assuming that $\| \hat{q}_t \|_\h{X} \leq R$ for all $1 \leq t \leq T$,
the following holds for all $1 \leq t \leq T$ with probability at least $1 - \delta/2$.
\begin{align*}
\| \epsilon_t \|_\h{X} = \| \hat{q}_t - \tilde{q}_t \|_\h{X} = O \left(\frac{\efs}{\sqrt{\lambda}} \right)
\end{align*}
\vspace{-5mm}
\label{thm:orth_state_bound}
\end{lemma}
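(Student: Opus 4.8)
\emph{Overview.} The plan is to control the out-of-range component $\epsilon_t = \hat{q}_t - \tilde{q}_t = P_x^\perp \hat{q}_t$ (writing $P_x^\perp$ for the orthogonal projection onto $\orange{x}$) \emph{pointwise in $t$}, rather than by an inductive accumulation. Since the hypothesis already gives $\|\hat{q}_t\|_\h{X} \leq R$ uniformly, the only object that needs to be estimated is how far the learned operator $\hat{W}_\lambda$ pushes a bounded input out of $\scrange{y}$. All the work is in showing that this ``leakage'' is of order $\efs/\sqrt{\lambda}$; the Lipschitz hypothesis on $f_{\rm filter}$ then converts it directly into a bound on $\epsilon_t$.

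\emph{Step 1 (the main obstacle).} Let $P_y$ and $P_y^\perp$ be the orthogonal projections onto $\scrange{y}$ and its complement. I would bound the operator norm $\|P_y^\perp \hat{W}_\lambda\|$. On the event of the S1 regression bound (Definition~\ref{thm:efs}), which holds with probability at least $1-\delta/2$, we have $\|P_y^\perp \hat{y}_t\|_\h{Y} = \|P_y^\perp(\hat{y}_t - \bar{y}_t)\|_\h{Y} < \efs$ for every $t$, because $\bar{y}_t \in \scrange{y}$ is annihilated by $P_y^\perp$. Writing $\hat{W}_\lambda = \hat{\Sigma}_{\hat{y}\hat{x}}(\hat{\Sigma}_{\hat{x}\hat{x}}+\lambda I)^{-1}$ and applying $P_y^\perp$ on the left, I would factor the cross-covariance as $\tfrac{1}{N} B X^*$, where $B$ collects the columns $P_y^\perp \hat{y}_t$ and $X$ collects the columns $\hat{x}_t$, and $\hat{\Sigma}_{\hat{x}\hat{x}}+\lambda I = \tfrac{1}{N} X X^* + \lambda I$. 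The core estimate is the spectral identity $\|X^*(XX^*+\lambda I)^{-1}\| \leq \tfrac{1}{2\sqrt{\lambda}}$ (immediate from the finite-rank SVD / functional calculus, valid in infinite dimensions). Combining this with $\|B\| \leq (\sum_t \|P_y^\perp\hat{y}_t\|^2)^{1/2} \leq \sqrt{N}\,\efs$ and tracking the $1/N$ normalization used throughout the finite-sample analysis, the sample-count factors cancel and yield $\|P_y^\perp \hat{W}_\lambda\| = O(\efs/\sqrt{\lambda})$. The delicate point is exactly this bookkeeping: one must make the $\sqrt{N}$ from $\|B\|$ cancel against the normalization so that only a single factor $\efs$ and a single $\lambda^{-1/2}$ survive.

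\emph{Step 2 (push-through).} For $t \geq 2$ we have $\hat{q}_t = f_{\rm filter}(\hat{p}_{t-1}, o_{t-1})$ with $\hat{p}_{t-1} = \hat{W}_\lambda \hat{q}_{t-1}$. Let $\tilde{p}_{t-1} = P_y \hat{p}_{t-1} \in \scrange{y}$. By Step~1, $\|\hat{p}_{t-1} - \tilde{p}_{t-1}\|_\h{Y} = \|P_y^\perp \hat{W}_\lambda \hat{q}_{t-1}\|_\h{Y} \leq \|P_y^\perp \hat{W}_\lambda\|\,\|\hat{q}_{t-1}\|_\h{X} = O(R\,\efs/\sqrt{\lambda})$. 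By hypothesis $f_{\rm filter}(\tilde{p}_{t-1}, o_{t-1}) \in \scrange{x}$, so $P_x^\perp$ annihilates it; therefore $\epsilon_t = P_x^\perp \hat{q}_t = P_x^\perp\!\left[f_{\rm filter}(\hat{p}_{t-1},o_{t-1}) - f_{\rm filter}(\tilde{p}_{t-1},o_{t-1})\right]$, and the $L$-Lipschitz property of $f_{\rm filter}$ gives $\|\epsilon_t\|_\h{X} \leq L\,\|\hat{p}_{t-1}-\tilde{p}_{t-1}\|_\h{Y} = O(\efs/\sqrt{\lambda})$, as claimed. No induction on $\epsilon_{t-1}$ is needed, precisely because the uniform bound $\|\hat{q}_{t-1}\|_\h{X} \leq R$ already absorbs any out-of-range component of the previous state. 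The base case $t=1$ is handled separately: the initial-state estimate lies in (or is projected onto) $\scrange{x}$, so it contributes no term of larger order.

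\emph{Probability.} Every inequality after Step~1 is deterministic on the event where the S1 regression bound holds, so the conclusion inherits the probability $1-\delta/2$ of that event. The only genuine difficulty is Step~1 --- extracting exactly one factor of $\lambda^{-1/2}$ from the regularized pseudo-inverse while the perturbation $P_y^\perp \hat{y}_t$ contributes exactly one factor of $\efs$; the remainder is a one-line Lipschitz argument.
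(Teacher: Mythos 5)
Your proposal is correct and follows essentially the same route as the paper's proof: bound the leakage operator $\|P_y^\perp \hat{W}_\lambda\|$ by $O(\efs/\sqrt{\lambda})$ using $\|P_y^\perp \hat{y}_t\| = \|P_y^\perp(\hat{y}_t-\bar{y}_t)\| \leq \efs$ together with the spectral bound on the regularized pseudo-inverse, then push through the $L$-Lipschitz filter with the uniform bound $\|\hat{q}_t\| \leq R$ (your Gram-factor estimate $\|X^*(XX^*+\lambda I)^{-1}\| \leq \tfrac{1}{2\sqrt{\lambda}}$ is the same computation the paper phrases via the decomposition $\Sigma_{YX}=\Sigma_Y^{1/2}V\Sigma_X^{1/2}$). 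The only slightly glossed point is the base case $t=1$, which the paper handles by noting the initial state is an average of S1 estimates each within $\efs$ of a point of $\scrange{x}$, giving $\|\epsilon_1\| \leq \efs$ --- consistent with, and of smaller order than, your claimed bound.
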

\todo{Note that the constant depends on $R = \max_t \| \hat{Q_t} \|_\h{X}$, which in turn 
depends on the trained model. So I am not sure whether this statement makes sense.}
Since $\hat{W}_\lambda$ is bounded, the prediction error due to $\epsilon_t$ diminishes at the same rate as $\|\epsilon_t\|_{\cal X}$.



\section{Experiments and Results}
\label{sec:exp}
We now demonstrate examples of tweaking the S1 regression to gain advantage.
In the first experiment we show that nonlinear regression can be used to reduce the number of parameters needed in S1, thereby improving statistical performance for learning an HMM\@. In the second experiment we show that we can encode prior knowledge as regularization.

\subsection{Learning A Knowledge Tracing Model}
\label{sec:exp-kt} 

In this experiment we attempt to model and predict the performance of students learning from an interactive computer-based tutor.  We use the Bayesian knowledge tracing (BKT) model \cite{corbett:95:kt}, which is essentially a 2-state HMM: the state $s_t$ represents whether a student has learned a knowledge component (KC), and the observation $o_t$ represents the success/failure of solving the $t^{\rm th}$ question in a sequence of questions that cover this KC\@.  
Figure \ref{fig:bkt} summarizes the model.
The events denoted by guessing, slipping, learning and forgetting typically have relatively low probabilities.


%
%

\begin{figure}
\centering
\includegraphics[height=3cm,width=6cm,clip=true,trim=0cm 5.5cm 0cm 0cm]{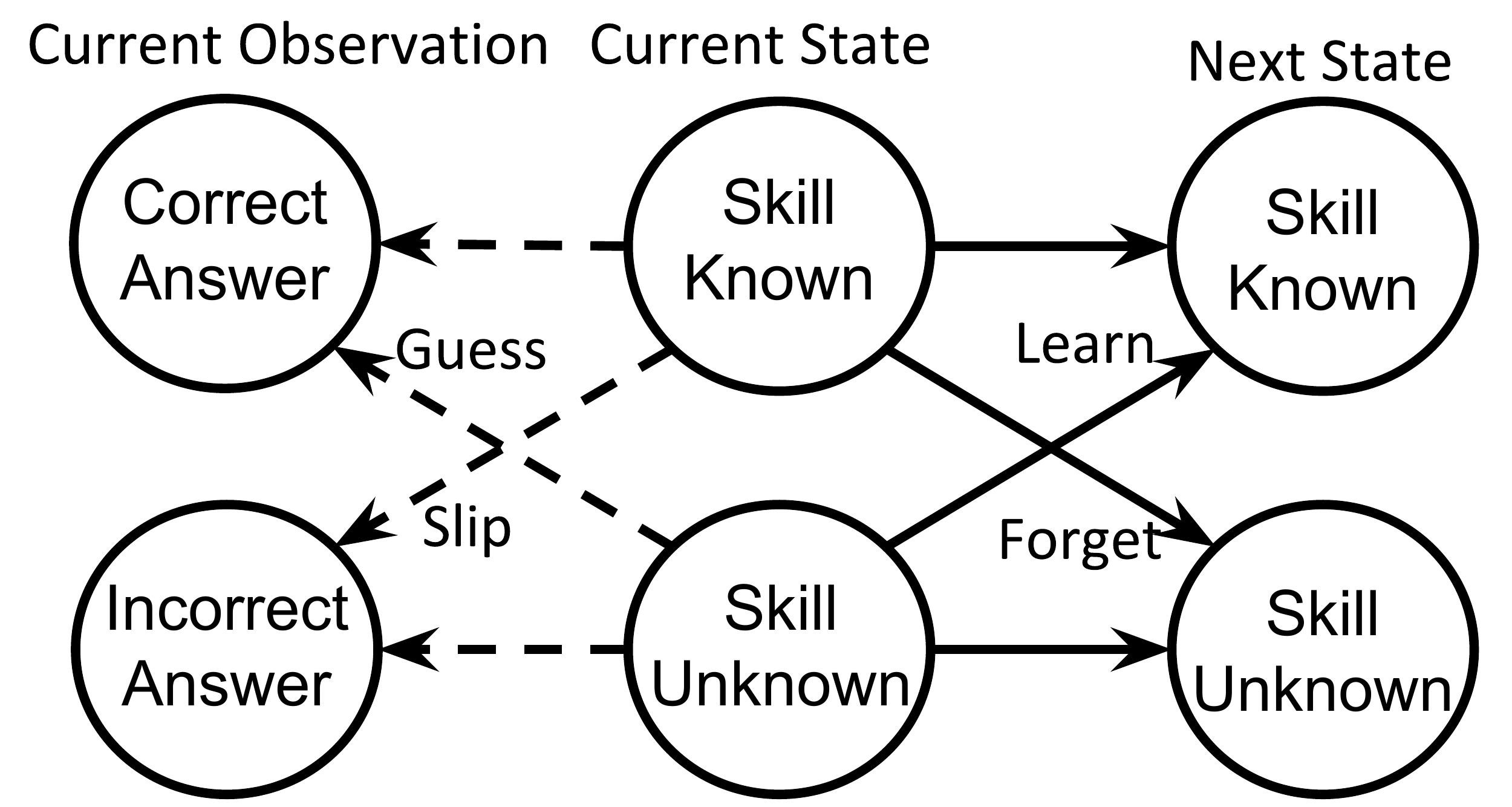}
\caption{Transitions and observations in BKT\@. Each node represents a possible \emph{value}
of the state or observation.
Solid arrows represent transitions while dashed arrows represent observations.
}
\label{fig:bkt}
\end{figure}

\subsubsection{Data Description}
We evaluate the model using the ``Geometry Area (1996-97)'' data available from DataShop \cite{koedinger:10:data}.
This data was generated by students learning introductory geometry, and contains attempts by 59 students in 12 knowledge components. As is typical for BKT, we consider a student's attempt at a question to be correct iff the student entered the correct answer on the first try, without requesting any hints from the help system.
Each training sequence consists of a sequence of first attempts for a student/KC pair. 
We discard sequences of length less than 5, resulting in a total of 325
sequences.  


\todo{GG: do we pad sequences at the beginning with special ``before
  the start of time'' observations?  This lets us take advantage of
  shorter sequences as well (where the history window extends before
  the beginning of the sequence), as well as the first few steps of
  longer sequences, increasing the available data.
	AH:Yes we do. I edited the text to reflect that. The reason for discarding sequences is for testing no training; to make
	sure that test sequences have sufficient steps for initial filtering.  
  GG: Interesting---what happens if we include the short sequences at
  test time?  Does it just increase the noise level of predictions?
  (We should be able to predict for them as well, so it would be
  preferable to include them if possible in a future version of the
  experiments.)}


\subsubsection{Models and Evaluation}
Under the (reasonable) assumption that the two states have distinct observation probabilities, this model is 1-observable. Hence we define the predictive state to be the expected next observation, which results in the following statistics: $\stat_t = o_t$ and $\fstat_t = o_t \otimes_k o_{t+1}$,
where $o_t$ is represented by a 2 dimensional indicator vector and  $\otimes_k$ denotes the Kronecker product. 
Given these statistics, the extended state $p_t = \E[\fstat_t \mid  o_{1:t-1}]$ 
is a joint probability table of $o_{t:t+1}$.

We compare three models that differ by history features and S1 regression method:

\textbf{Spec-HMM:}
This baseline uses $h_t = o_{t-1}$ and linear S1 regression, making it equivalent to the spectral HMM method of \cite{hsu:09:hmm}, as detailed in the supplementary material.

\textbf{Feat-HMM:}
This baseline represents $h_t$ by an indicator vector of the joint assignment
of the previous $b$ observations (we set $b$ to 4) and uses linear S1 regression. This is essentially
a feature-based spectral HMM \cite{siddiqi:10:rrhmm}.
It thus incorporates more history information compared to Spec-HMM
at the expense of increasing the number of S1 parameters by $O(2^b)$.

\textbf{LR-HMM:} 
This model represents $h_t$ by a binary vector of length $b$ encoding the previous $b$ observations and uses logistic regression as the S1 model. Thus, it uses the same history information 
as Feat-HMM but reduces the number of parameters to $O(b)$ at the expense of inductive bias.

%
%

We evaluated the above models using 1000 random splits of the 325 sequences into 200 training and 125 testing. For each testing observation
$o_t$ we compute the absolute error between actual and expected value (i.e. $|\delta_{o_t = 1} - \hat{P}(o_t = 1 \mid  o_{1:t-1})|$). 
We report the mean absolute error for each split.
The results are displayed in Figure \ref{fig:results}.\footnote{The differences have similar sign but smaller magnitude if we use RMSE instead of MAE.}
 We see that,
while incorporating more history information increases accuracy (Feat-HMM vs.\ Spec-HMM), being able to incorporate the
same information using a more compact model gives an additional gain in accuracy (LR-HMM vs.\ Feat-HMM).
We also compared the LR-HMM method to an HMM trained using expectation maximization (EM). We found that the LR-HMM model is much
faster to train than EM
while being on par with it in terms of prediction error.\footnote{We used MATLAB's built-in logistic regression and EM functions.}

\begin{figure}[h!]
\centering
\begin{tabular}{cccc}
\includegraphics[scale=0.25]{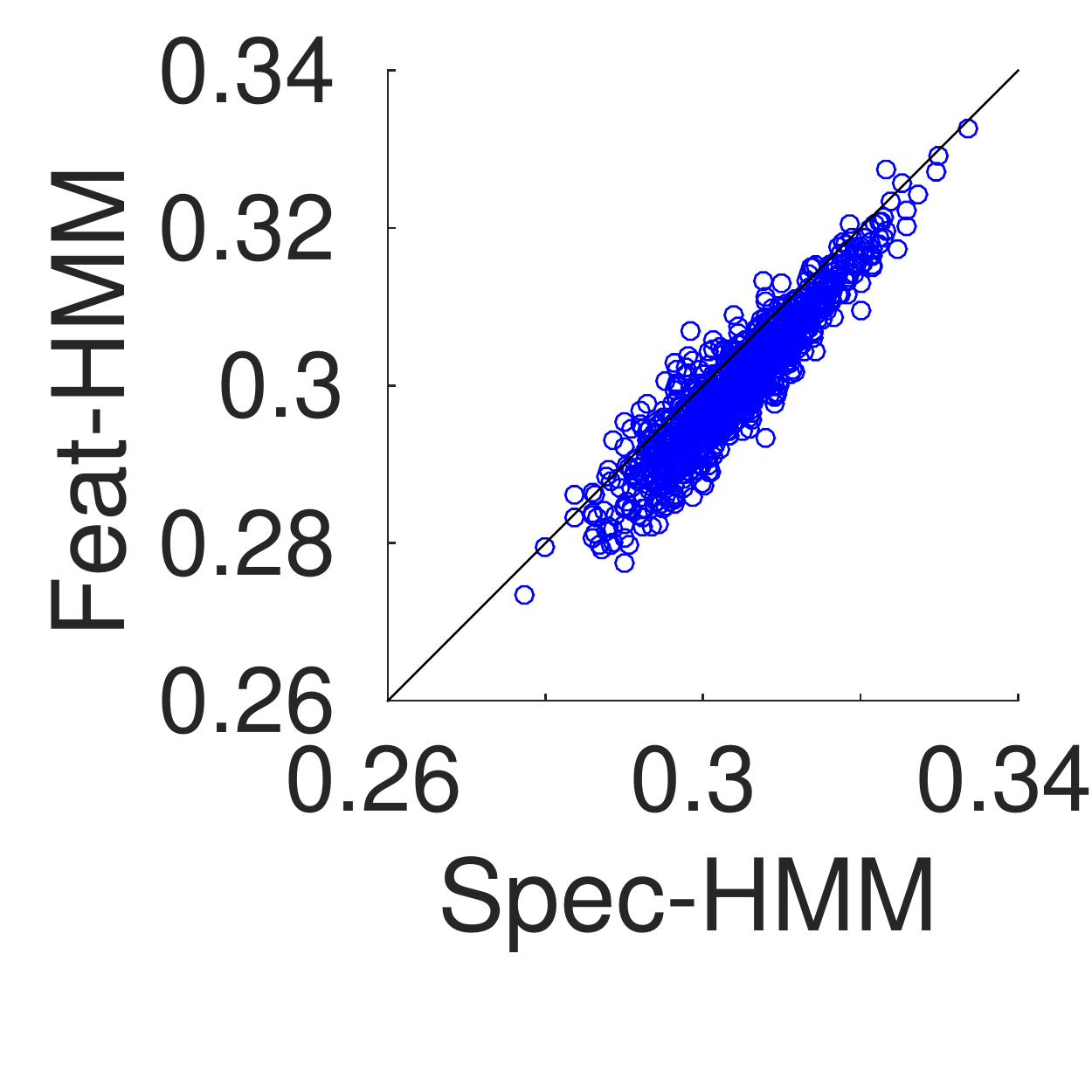}\hspace{-4mm} 
& \includegraphics[scale=0.25]{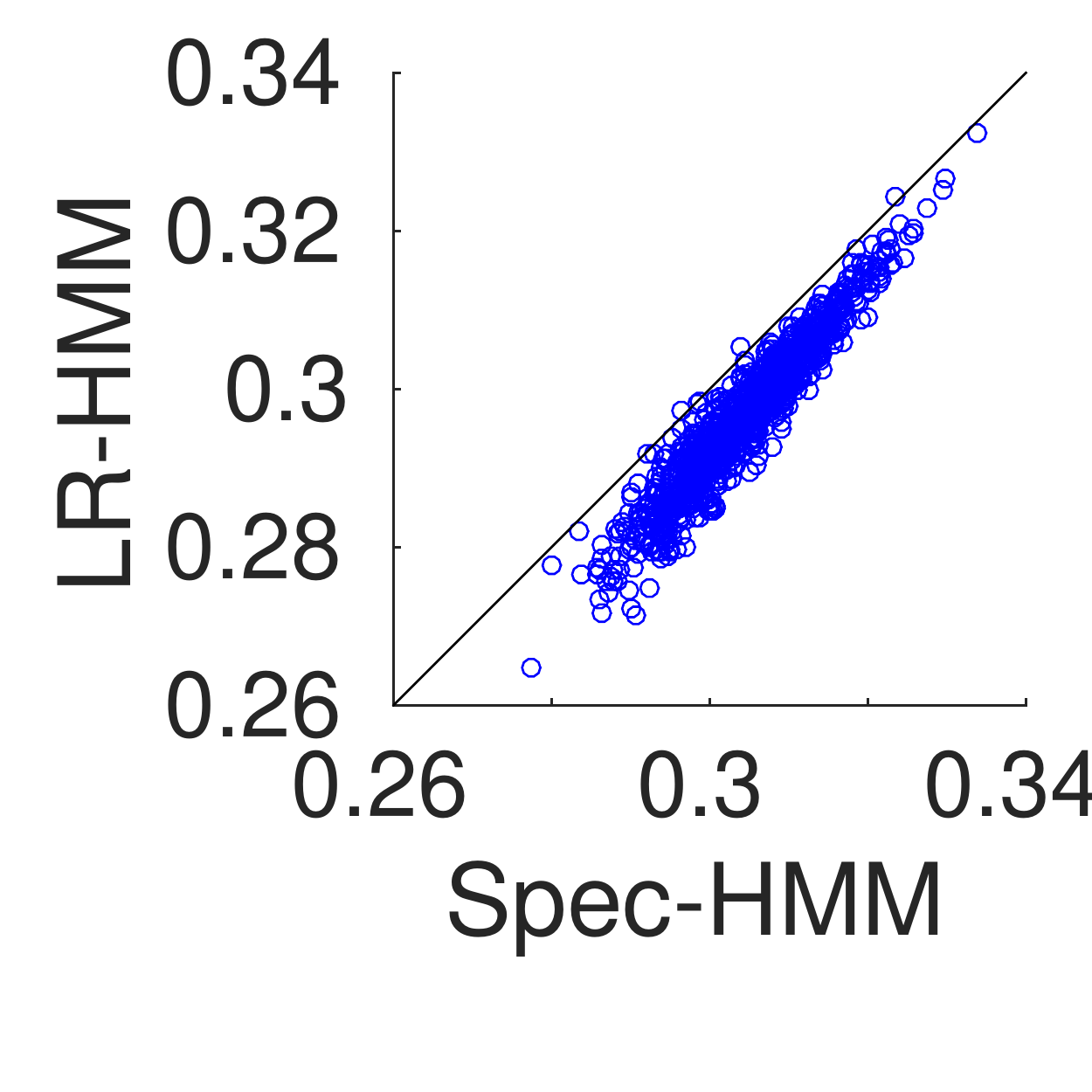}\hspace{-4mm} 
& \includegraphics[scale=0.25]{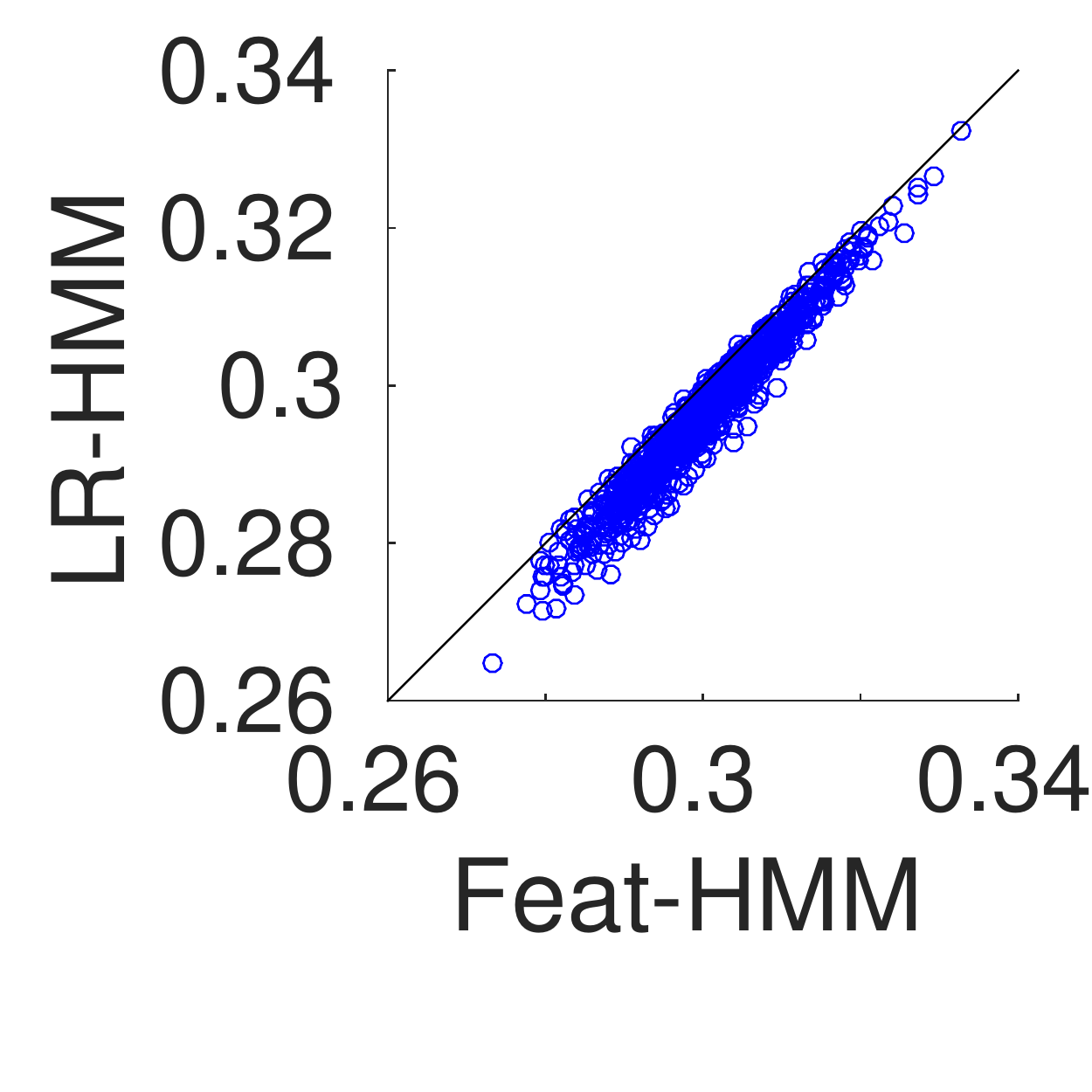}\hspace{-4mm} 
& \includegraphics[scale=0.25]{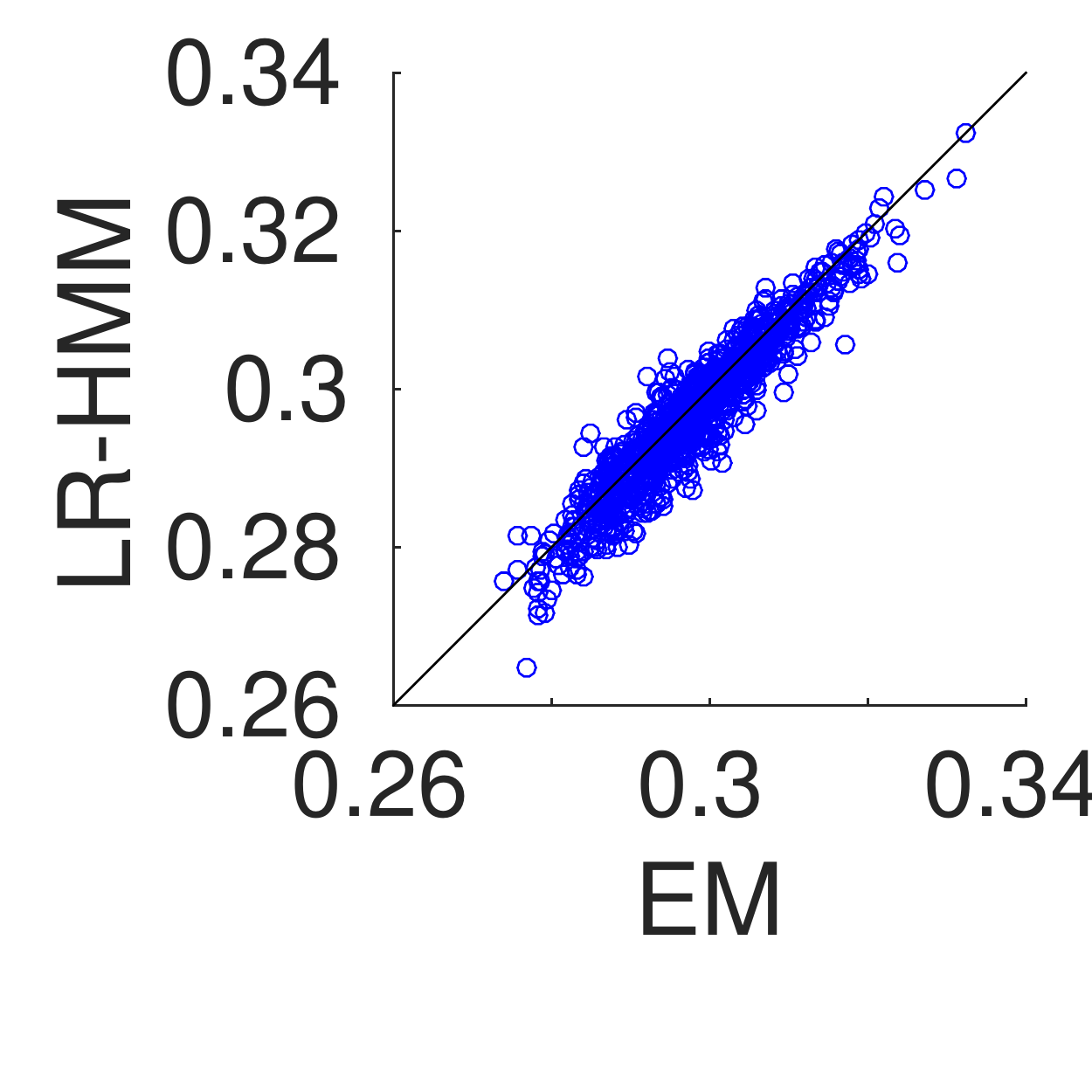}
\end{tabular}\\
\small
\begin{tabular}{|c|c|c|c|c|}
\hline
Model & Spec-HMM & Feat-HMM & LR-HMM & EM\\
\hline 
Training time (relative to Spec-HMM) & 1 & 1.02 & 2.219 & 14.323\\
\hline 
\end{tabular}
\caption{Experimental results: each graph compares the performance of two models
(measured by mean absolute error) on 1000 train/test splits. The black line is $x = y$.
Points below this line indicate that model $y$ is better than model $x$. The table shows training time.
}
\label{fig:results}
\end{figure}
\todo{GG: fonts in fig 5 should be much larger.}
\todo{GG: might want to consider also a model with separate features
  and a linear regression}
\todo{GG: might be able to show more benefit if we predict farther
  into the future: filter for $k$ steps, predict without incorporating
  observations for $k'$ steps, then evaluate RMSE on the next
  observation.}

\subsection{Modeling Independent Subsystems Using Lasso Regression}

Spectral algorithms for Kalman filters typically use the left singular vectors of the covariance between
history and future features as a basis for the state space.  However,
this basis hides any sparsity that might be present in our original basis.
In this experiment, we show that we can instead use lasso (without
dimensionality reduction) as our S1 regression algorithm to discover sparsity.
This is useful, for example, when the system consists of multiple independent subsystems, each of which 
affects a subset of the observation coordinates. 

To test this idea we generate a sequence of 30-dimensional
observations from a Kalman filter.  Observation dimensions 1 through
10 and 11 through 20 are generated from two independent subsystems of
state dimension 5. Dimensions 21-30 are generated from white
noise. Each subsystem's transition and observation matrices have
random Gaussian coordinates, with the transition matrix scaled to have
a maximum eigenvalue of 0.95. States and observations are perturbed by
Gaussian noise with covariance of $0.01 I$ and $1.0 I$ respectively.

We estimate the state space basis using 1000 examples (assuming 1-observability) and compare the singular vectors of the past to future regression matrix to those obtained from the Lasso regression matrix. The result is shown in figure \ref{fig:u}. Clearly, using Lasso as stage 1 regression 
results in a basis that better matches the structure of the underlying system.

\begin{figure}
\centering
\includegraphics[width=2.2cm,height=2.2cm]{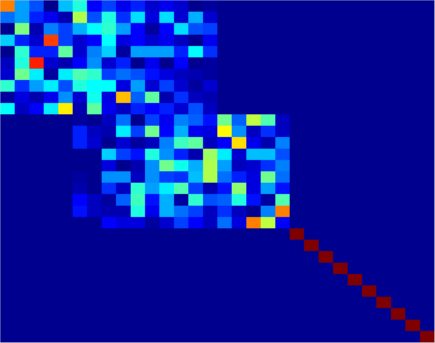}
\includegraphics[width=2.2cm,height=2.2cm]{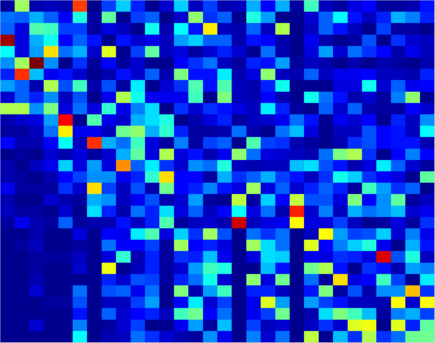}
\includegraphics[width=2.2cm,height=2.2cm]{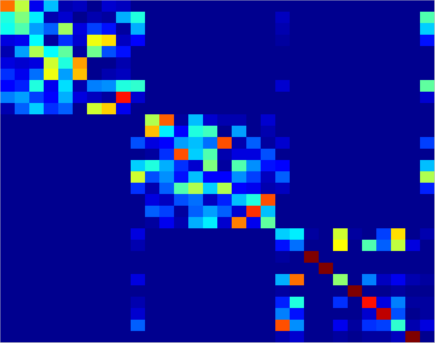}
\includegraphics[width=1cm,height=2.2cm]{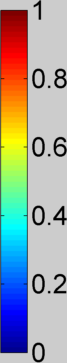}
\caption{Left singular vectors of (left) true linear predictor from $o_{t-1}$ to $o_t$ (i.e. $O T O^+$),
(middle) covariance matrix between $o_{t}$ and $o_{t-1}$ and (right) S1 Sparse regression weights.
Each column corresponds to a singular vector (only absolute values are depicted). Singular vectors are ordered
by their mean coordinate, interpreting absolute values as a probability distribution over coordinates.
}
\label{fig:u}
\end{figure}
\section{Conclusion}
In this work we developed a general framework for dynamical system learning using supervised learning methods.  The framework relies on two key principles: first, we extend the idea of predictive state to include extended state as well, allowing us to represent all of inference in terms of predictions of observable features.  Second, we use past features as instruments in an instrumental regression, denoising state estimates that then serve as training examples to estimate system dynamics.

We have shown that this framework encompasses and provides a unified view of some previous successful dynamical system learning algorithms. 
We have also demostrated that it can be used to extend existing algorithms
to incorporate nonlinearity and regularizers, resulting in better state estimates.
As future work, we would like to apply this framework to leverage additional techniques such as manifold embedding and transfer learning
in stage 1 regression. We would also like to extend the framework to controlled processes.

\nocite{song:09}

\bibliographystyle{unsrt}
\bibliography{ins_regression}

\clearpage
\appendix
\renewcommand\thesection{\Alph{section}}
\numberwithin{equation}{section}
\numberwithin{theorem}{section}
\section{Spectral and HSE Dynamical System Learning as Regression}
In this section we provide examples of mapping some of the successful dynamical system learning algorithms to our framework.
\subsection{HMM}
In this section we show that we can use instrumental regression framework to reproduce 
the spectral learning algorithm for learning HMM \cite{hsu:09:hmm}.
We consider 1-observable models but the argument applies to $k$-observable models.
In this case we use $\stat_t = e_{o_t}$ and $\fstat_t = e_{o_{t:t+1}} = e_{o_t} \otimes_k e_{o_{t+1}}$, where $\otimes_k$ denotes the kronecker product.
Let $P_{i,j} \equiv \E[e_{o_i} \otimes e_{o_j}]$ be the joint probability table of observations $i$ and $j$
and let $\hat{P}_{i,j}$ be its estimate from the data.
We start with the (very restrictive) case where $P_{1,2}$ is invertible.
Given samples of $h_2 = e_{o_1}$, $\stat_2 = e_{o_2}$ and $\fstat_2 = e_{o_{2:3}}$, 
in S1 regression we apply linear regression to learn two matrices $\hat{W}_{2,1}$ and $\hat{W}_{2:3,1}$ such that:
\todo{I also assume $P_{1,1}$ is invertible (i.e. the sequence can start with any observation
with non-zero probability)}
\begin{align}
\hat{\E}[\stat_2 | h_2] & = \hat{\Sigma}_{o_2 o_1}  \hat{\Sigma}_{o_1}^{-1} h_2 = \hat{P}_{2,1} \hat{P}_{1,1}^{-1} h_t \equiv \hat{W}_{2,1} h_2\\
\hat{\E}[\fstat_2 | h_2] & = \hat{\Sigma}_{o_{2:3} o_1}  \hat{\Sigma}_{o_1}^{-1} h_2 = \hat{P}_{2:3,1} \hat{P}_{1,1}^{-1} h_2 \equiv \hat{W}_{2:3,1} h_2,
\end{align}

where $P_{2:3,1} \equiv \E[e_{o_{2:3}} \otimes e_{o_1}]$

In S2 regression, we learn the matrix $\hat{W}$ that gives the least squares solution to the system of equations
\begin{align*}
\hat{\E}[\fstat_2 | h_2] \equiv \hat{W}_{2:3,1} e_{o_1}  = \hat{W} (\hat{W}_{2,1} e_{o_1}) \equiv \hat{W} \hat{\E}[\stat_2 | h_2] \quad , \text{for given samples of $h_2$}
\end{align*}
which gives
\begin{align}
\nonumber \hat{W} & = \hat{W}_{2:3,1} \hat{\E}[e_{o_1} e_{o_1}^\top] \hat{W}_{2,1}^\top
\left(\hat{W}_{2,1} \hat{\E}[e_{o_1} e_{o_1}^\top] \hat{W}_{2,1}^\top \right)^{-1} \\
\nonumber  & = \left(\hat{P}_{2:3,1} \hat{P}_{1,1}^{-1} \hat{P}_{2,1}^\top\right)
\left(\hat{P}_{2,1} \hat{P}_{1,1}^{-1} \hat{P}_{2,1}^\top\right)^{-1} \\
& = \hat{P}_{2:3,1} \left( \hat{P}_{2,1}\right)^{-1}
\end{align}

Having learned the matrix $\hat{W}$, we can estimate 
\begin{align*}
\hat{P}_t \equiv \hat{W} q_t
\end{align*}
starting from a state $q_t$. Since $p_t$ specifies a joint distribution over $e_{o_{t+1}}$ and $e_{o_t}$
we can easily condition on (or marginalize $o_t$) to obtain $q_{t+1}$. We will show that this is equivalent to
learning and applying observable operators as in \cite{hsu:09:hmm}: 

For a given value $x$ of $o_2$, define 
\begin{align}
B_x = u_x^\top \hat{W} = u_x^\top \hat{P}_{2:3,1} \left( \hat{P}_{2,1}^\top\right)^{-1},
\end{align}
where $u_x$ is an $|{\cal O}| \times |{\cal O}|^2$ matrix which selects a block of rows in
$\hat{P}_{2:3,1}$ corresponding to $o_2 = x$. 
Specifically, $u_x = \delta_x \otimes_k I_{|{\cal O}|}$. \footnote{Following the notation used in \cite{hsu:09:hmm}, 
$u_x^\top \hat{P}_{2:3,1} \equiv \hat{P}_{3,x,1}$}.

\begin{align*}
q_{t+1} & = \hat{\E}[e_{o_{t+1}} | o_{1:t}] \propto u_{o_{t}}^\top \hat{\E}[e_{o_{t:t+1}} | o_{1:t-1}]\\
& = u_{o_{t}}^\top \hat{\E}[\fstat_{t} | o_{1:t-1}] = u_{o_{t}}^\top \hat{W} \E[\psi_t | o_{1:t-1}] = B_{o_{t}} q_t
\end{align*}
with a normalization constant given by
\begin{align}
\frac{1}{1^\top B_{o_{t}} q_t}
\label{eq:hmm_normalizer}
\end{align}

Now we move to a more realistic setting, where we have $\mathrm{rank}(P_{2,1}) = m < |{\cal O}|$. 
Therefore we project the predictive state using a matrix $U$ that preserves the dynamics, by requiring that $U^\top O$ 
(i.e. $U$ is an independent set of columns spanning the range of the HMM observation matrix $O$).

It can be shown \cite{hsu:09:hmm} that ${\cal R}(O) = {\cal R}(P_{2,1}) = {\cal R}(P_{2,1} P_{1,1}^{-1})$.
Therefore, we can use the leading $m$ left singular vectors of $\hat{W}_{2,1}$
, which corresponds to replacing the linear regression in S1A with a reduced rank regression.
However, for the sake of our discussion we will use the singular vectors of $P_{2,1}$.
In more detail, let $[U,S,V]$ be the rank-$m$ SVD decomposition of $P_{2,1}$.
We use $\stat_t = U^\top e_{o_t}$
and $\fstat_t = e_{o_t} \otimes_k U^\top {e_{o_{t+1}}}$. 
S1 weights are then given by $\hat{W}^{rr}_{2,1} = U^\top \hat{W}_{2,1}$
and $\hat{W}^{rr}_{2:3,1} = (I_{|{\cal O}|} \otimes_k U^\top) \hat{W}_{2:3,1}$
and S2 weights are given by
\begin{align}
\nonumber \hat{W}^{rr} & = (I_{|{\cal O}|} \otimes_k U^\top) \hat{W}_{2:3,1} \hat{\E}[e_{o_1} e_{o_1}^\top] \hat{W}_{2,1}^\top U
\left(U^\top \hat{W}_{2,1} \hat{\E}[e_{o_1} e_{o_1}^\top] \hat{W}_{2,1}^\top U \right)^{-1} \\
\nonumber & = (I_{|{\cal O}|} \otimes_k U^\top) \hat{P}_{2:3,1} \hat{P}_{1,1}^{-1} V S \left(S V^\top \hat{P}_{1,1}^{-1} V S\right)^{-1}\\
 & = (I_{|{\cal O}|} \otimes_k U^\top) \hat{P}_{2:3,1} \hat{P}_{1,1}^{-1} V \left(V^\top \hat{P}_{1,1}^{-1} V\right)^{-1} S^{-1}
\label{eq:hmm_rr_w}
\end{align}

In the limit of infinite data, $V$ spans $\mathrm{range}(O) = \mathrm{rowspace}(P_{2:3,1})$ and hence 
$P_{2:3,1} = P_{2:3,1} VV^\top$. Substituting in \eqref{eq:hmm_rr_w} gives
\begin{align*}
W^{rr} = (I_{|{\cal O}|} \otimes_k U^\top) P_{2:3,1} V S^{-1} = (I_{|{\cal O}|} \otimes_k U^\top) P_{2:3,1} \left( U^\top P_{2,1} \right)^+
\end{align*}
Similar to the full-rank case we define, for each observation $x$ an $m \times |{\cal O}|^2$ selector matrix
$u_x = \delta_x \otimes_k I_m$ and an observation operator 
\begin{align}
B_x = u^\top_x \hat{W}^{rr} \to U^\top P_{3,x,1} \left( U^\top P_{2,1} \right)^+
\label{eq:hmm_rr_bx}
\end{align}
This is exactly the observation operator obtained in \cite{hsu:09:hmm}. 
However, instead of using \ref{eq:hmm_rr_w}, they use \ref{eq:hmm_rr_bx}
with $P_{3,x,1}$ and $P_{2,1}$ replaced by their empirical estimates.

Note that for a state $b_t = \E[\stat_t | o_{1:t-1}]$,  $B_x b_t = P(o_t | o_{1:t-1}) \E[\stat_{t+1} | o_{1:t}] = P(o_t | o_{1:t-1}) b_{t+1}$. 
To get $b_{t+1}$, the normalization constant becomes $\frac{1}{P(o_t | o_{1:t-1})} = \frac{1}{b_{\infty}^\top B_x b_t}$, 
where $b_\infty^\top b = 1$ for any valid predictive state $b$. To estimate $b_\infty$ we solve the aforementioned
condition for states estimated from all possible values of history features $h_t$. This gives,
\begin{align*}
b_\infty^\top \hat{W}^{rr}_{2,1} I_{|{\cal O}|} =
b_\infty^\top U^\top \hat{P}_{2,1} \hat{P}_{1,1}^{-1} I_{|{\cal O}|} = 1_{|{\cal O}|}^\top,
\end{align*}
where the columns of $I_{|{\cal O}|}$ represent all possible values of $h_t$. 
This in turn gives
\begin{align*}
b_\infty^\top & = 1_{|{\cal O}|}^\top \hat{P}_{1,1} (U^\top \hat{P}_{2,1})^+ \\
& = \hat{P}_1^\top (U^\top \hat{P}_{2,1})^+,
\end{align*}
the same estimator proposed in \cite{hsu:09:hmm}.

\subsection{Stationary Kalman Filter}
A Kalman filter is given by
\begin{align*}
s_t & = O s_{t-1} + \nu_t \\
o_t & = T s_t + \epsilon_t \\
\nu_t & \sim {\cal N}(0, \Sigma_s) \\
\epsilon_t & \sim {\cal N}(0, \Sigma_o) \\
\end{align*}
We consider the case of a \emph{stationary} filter where $\Sigma_t \equiv \E[s_t s_t^\top]$
is independent of $t$. 
We choose our statistics 
\begin{align*}
\pstat_t & = o_{t-H:t-1}\\
\stat_t & = o_{t:t+F-1}\\
\fstat_t & = o_{t:t+F},
\end{align*}

Where a window of observations is represented by stacking individual observations into a single vector. 
It can be shown \cite{boots:12:thesis,vanoverschee:96} that 
\begin{align*}
\E[s_t | \pstat_t] = \Sigma_{s,\pstat} \Sigma_{\pstat,\pstat}^{-1}\pstat_t
\end{align*}
and it follows that
\begin{align*}
\E [\stat_t | \pstat_t] & = \Gamma \Sigma_{s,\pstat} \Sigma_{\pstat,\pstat}^{-1}\pstat_t = W_1 h_t\\
\E [\fstat_{t} | \pstat_t] & = \Gamma_+ \Sigma_{s,\pstat} \Sigma_{\pstat,\pstat}^{-1}\pstat_t = W_2 h_t\\
\end{align*}
where $\Gamma$ is the extended observation operator
\begin{align*}
\Gamma \equiv \left(
\begin{array}{c}
O \\
OT \\
\vdots \\
OT^F
\end{array}
\right) , 
\Gamma_+ \equiv \left( \begin{array}{c}
O \\
OT \\
\vdots \\
OT^{F+1}
\end{array}
\right)
\end{align*}

It follows that $F$ and $H$ must be large enough to have $\mathrm{rank}(W) = n$. Let $U \in \mathbb{R}^{mF \times n}$ be the matrix of left singular values
of $W_1$ corresponding to non-zero singular values. 
Then $U^\top \Gamma$ is invertible and we can write
\begin{align*}
\E [\stat_t | \pstat_t] & = UU^\top \Gamma \Sigma_{s,\pstat} \Sigma_{\pstat,\pstat}^{-1}\pstat_t = W_1 h_t\\
\E [\fstat_{t} | \pstat_t] & = \Gamma_+ \Sigma_{s,\pstat} \Sigma_{\pstat,\pstat}^{-1}\pstat_t = W_2 h_t\\
\E [\fstat_{t} | \pstat_t] & = \Gamma_+ (U^\top \Gamma)^{-1} U^\top \left(UU^\top \Gamma \Sigma_{s,\pstat} \Sigma_{\pstat,\pstat}^{-1}\pstat_t\right) \\
& = W \E [\stat_t | \pstat_t]\\
\end{align*}
which matches the instrumental regression framework. 
\todo{Covariance pseudo-inverse}
For the steady-state case (constant Kalman gain), one can estimate
$\Sigma_{\fstat}$ given the data and the parameter $W$ by solving
Riccati equation as described in \cite{vanoverschee:96}.
$\E [\fstat_{t} | o_{1:t-1}]$ and $\Sigma_{\fstat}$ then specify a joint Gaussian distribution
over the next $F+1$ observations where marginalization and conditioning can be easily performed.

We can also assume a Kalman filter that is not in the steady state (i.e. the Kalman gain is not constant).
In this case we need to maintain sufficient statistics for a predictive Gaussian distribution
(i.e. mean and covariance). 
Let $\mathrm{vec}$ denote the vectorization operation, which stacks the columns of a matrix
into a single vector. We can stack $h_t$ and $\mathrm{vec}(h_t h_t^\top)$ to
into a single vector that we refer to as 1st+2nd moments vector. We do the same for future and extended future.
We can, in principle, perform linear regression on these 1st+2nd moment vectors but that 
requires an unnecessarily large number of parameters. Instead, we can learn an S1A regression function of the form
\begin{align}
\E[\stat_t | h_t] & = W_1 h_t \\
\E[\stat_t \stat_t^\top | h_t] & = W_1 h_t h_t^\top W_1 + R \\
\end{align}
Where $R$ is simply the covariance of the residuals of the 1st moment regression
(i.e. covariance of $r_t = \stat_t - \E[\stat_t | h_t]$). This is still a linear model
in terms of 1st+2nd moment vectors and hence we can do the same for S1B and S2 regression models. This way, the extended belief vector $p_t$ (the expectation of 1st+2nd moments of extended future) fully specifies a joint distribution
over the next $F+1$ observations.

\subsection{HSE-PSR}
We define a class of non-parametric two-stage instrumental regression models. 
By using conditional mean embedding \cite{song:09} as S1 regression model, we recover 
a single-action variant of HSE-PSR \cite{boots:13:hsepsr}.
Let $\h{X}, \h{Y}, \h{Z}$ denote three reproducing kernel Hilbert spaces with
reproducing kernels $\kernel{X}, \kernel{Y}$ and $\kernel{Z}$ respectively.
Assume $\stat_t \in \h{X}$ 
and that $\fstat_t \in {\cal Y}$ is defined as the tuple $(o_{t} \otimes o_{t}, \stat_{t+1} \otimes o_{t})$.
Let $\stattrain \in \h{X} \otimes \mathbb{R}^N$, 
$\fstattrain \in \h{Y} \otimes \mathbb{R}^N$ 
and $\pstattrain \in \h{Z} \otimes \mathbb{R}^N$
be operators that represent training data.
Specifically, $\stat_s$, $\fstat_s$, $\pstat_s$ are the $s^{th}$ "columns" in $\stattrain$ and $\fstattrain$ and $\pstattrain$ respectively.
It is possible to implement S1 using a non-parametric regression method
that takes the form of a linear smoother. In such case the training data for S2 regression take the form
\begin{align*}
\hat{\E}[\stat_t \mid \pstat_t] & = \sum_{s=1}^N \beta_{s \mid h_t} \psi_s \\
\hat{\E}[\fstat_t \mid \pstat_t] & = \sum_{s=1}^N \gamma_{s \mid h_t} \fstat_s,
\end{align*}
where $\beta_s$ and $\gamma_s$ depend on $\pstat_t$. This produces the following training operators 
for S2 regression:
\begin{align*}
\tilde{\stattrain} & = \stattrain \mathbf{B} \\
\tilde{\fstattrain} & = \fstattrain \mathbf{\Gamma},
\end{align*}
where $\mathbf{B}_{st} = \beta_{s|h_t}$ and 
$\mathbf{\Gamma}_{st} = \gamma_{s|h_t}$.
With this data, S2 regression 
uses a Gram matrix formulation to estimate the operator
\begin{align}
W = \fstattrain \mathbf{\Gamma} (\mathbf{B}^\top G_{\h{X}, \h{X}}  \mathbf{B} + \lambda I_N)^{-1} \mathbf{B}^\top \stattrain^*
\label{eq:rkhs_s2w}
\end{align}

Note that we can use an arbitrary method to estimate $\mathbf{B}$. Using conditional mean maps,
the weight matrix $\mathbf{B}$
is computed using kernel ridge regression
\begin{align}
\mathbf{B} = (G_{\h{Z},\h{Z}} + \lambda I_N)^{-1} G_{\h{Z},\h{Z}}
\label{eq:vectorreg}
\end{align}

HSE-PSR learning is similar to this setting, with $\stat_t$
being a conditional expectation operator of test observations given test actions.
For this reason, kernel ridge regression is replaced by application of kernel Bayes rule
\cite{fukumizu:13:kbr}.

For each $t$, S1 regression will produce a denoised prediction
$\hat{E}[\fstat_t \mid h_t]$ as a linear combination of training feature maps
\begin{align*}
\hat{E}[\fstat_t \mid h_t]& = \fstattrain \alpha_{t} = \sum_{s=1}^N \alpha_{t,s} \fstat_s \\
\end{align*}
This corresponds to the covariance operators
\begin{align*}
\hat{\Sigma}_{\stat_{t+1} o_{t} \mid h_t} & = \sum_{s=1}^N \alpha_{t,s} \stat_{s+1} \otimes o_{s}
= \stattrain' \mathrm{diag}(\alpha_t) \obstrain^*
 \\
\hat{\Sigma}_{o_{t} o_{t}  \mid h_t} & = \sum_{s=1}^N \alpha_{t,s} o_{s} \otimes o_{s}
= \obstrain \mathrm{diag}(\alpha_t) \obstrain^*
\end{align*}
Where, $\stattrain'$ is the shifted future training operator satisfying $\stattrain' e_t = \psi_{t+1}$  Given these two covariance operators, we can 
use kernel Bayes rule \cite{fukumizu:13:kbr} to condition on $o_{t}$ which gives
\begin{align}
q_{t+1} = \hat{E}[\psi_{t+1}  \mid h_t] = \hat{\Sigma}_{\stat_{t+1} o_{t}  \mid h_t}
(\hat{\Sigma}_{o_{t} o_{t}  \mid h_t} + \lambda I)^{-1} o_{t}.
\label{eq:rkhs_cond}
\end{align}
Replacing $o_{t}$ in \eqref{eq:rkhs_cond} with its conditional expectation
$\sum_{s=1}^N \alpha_s o_s$ corresponds to marginalizing over $o_t$ (i.e. prediction).
A stable Gram matrix formulation for \eqref{eq:rkhs_cond} is given by \cite{fukumizu:13:kbr}
\begin{align}
\nonumber & q_{t+1} \\ 
\nonumber & \quad = \stattrain' \mathrm{diag}(\alpha_t) G_{{\cal O},{\cal O}} 
((\mathrm{diag}(\alpha_t) G_{{\cal O},{\cal O}})^2 + \lambda N I)^{-1} \\
\nonumber & \quad\quad .\mathrm{diag}(\alpha_t) \obstrain^* o_{t+1} \\
& \quad = \stattrain' \tilde{\alpha}_{t+1},
\label{eq:rkhs_cond_gram}
\end{align}
which is the state update equation in HSE-PSR.
Given $\tilde{\alpha}_{t+1}$ we perform S2 regression to estimate 
\begin{align*}
\hat{P}_{t+1} = \hat{\E}[\fstat_{t+1} \mid o_{1:t+1}] = \fstattrain \alpha_{t+1}
= W \stattrain' \tilde{\alpha}_{t+1},
\end{align*}
where $W$ is defined in \eqref{eq:rkhs_s2w}.
\section{Proofs}
\subsection{Proof of Main Theorem}
In this section we provide a proof for theorem \ref{thm:main_short}.
We provide finite sample analysis of the effects of S1 regression, covariance estimation and regularization.
The asymptotic statement becomes a natural consequence. 

We will make use of matrix Bernstein's inequality
stated below:
\begin{lemma}[Matrix Bernstein's Inequality \cite{hsu:12b:matrix}]
Let $A$ be a random square symmetric matrix, and $r > 0$, $v > 0$ and $k > 0$ be 
such that, almost surely,
\begin{align*}
\E[A] = 0, \quad \lambda_{\max}[A] \leq r, \\
\lambda_{\max}[\E[A^2]] \leq v, \quad \tr(\E[A^2]) \leq k.
\end{align*}
If $A_1,A_2,\dots,A_N$ are independent copies of $A$, then for any $t > 0$,
\begin{align}
\nonumber \Pr\left[\lambda_{\max} \left[ \frac{1}{N} \sum_{t=1}^N A_t \right] 
> \sqrt{\frac{2vt}{N}} + \frac{rt}{3N} \right] \\
\quad\quad \leq \frac{kt}{v}(e^t - t - 1)^{-1}.
\label{eq:brenestein1}
\end{align}
If $t \geq 2.6$, then $t(e^t - t - 1)^{-1} \leq e^{-t/2}.$
\label{thm:brenestein1}
\end{lemma}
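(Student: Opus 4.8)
The statement is exactly the intrinsic-dimension matrix Bernstein inequality of \cite{hsu:12b:matrix}, so one legitimate option is to invoke it as cited; but here is the route I would take to prove it from scratch. The plan is to run the matrix Laplace-transform (Chernoff) method on $S = \sum_{t=1}^N A_t$, noting that $\lambda_{\max}(\frac1N\sum_t A_t) > s$ iff $\lambda_{\max}(S) > Ns$ with $s = \sqrt{2vt/N} + rt/(3N)$. First I would reduce the tail to a matrix moment-generating function: for $\theta > 0$, the probability $\Pr[\lambda_{\max}(S) > Ns]$ is controlled by $e^{-\theta N s}\,\E\,\tr\exp(\theta S)$, via Markov applied to $\lambda_{\max}(e^{\theta S}) \le \tr e^{\theta S}$, with the refinement described below to suppress the ambient-dimension factor.

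The second step is subadditivity of matrix cumulants (Lieb's concavity theorem, in the form of the Ahlswede--Winter/Tropp master inequality): since the $A_t$ are i.i.d., $\E\,\tr\exp(\theta S) \le \tr\exp\!\big(N\log\E\,e^{\theta A}\big)$. The third step bounds the per-summand log-MGF. Because $\E A = 0$ and $\lambda_{\max}(A) \le r$ almost surely, the scalar estimate $e^x - 1 - x \le \frac{x^2/2}{1 - x/3}$, applied through the increasing function $(e^x-1-x)/x^2$ to the eigenvalues of $\theta A$ (all $\le \theta r$), transfers to the operator inequality $\log\E\,e^{\theta A} \preceq g(\theta)\,\E A^2$ with $g(\theta) = \frac{\theta^2/2}{1 - r\theta/3}$ for $0 < \theta < 3/r$. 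Using monotonicity of the trace exponential then gives $\E\,\tr\exp(\theta S) \le \tr\exp\!\big(g(\theta)\,N\,\E A^2\big)$.

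The delicate fourth step, and the main obstacle, is converting this into a genuinely intrinsic-dimension bound with prefactor $k/v$ rather than the ambient dimension $\tr I$. Writing $M = g(\theta)N\,\E A^2 \succeq 0$, so that $\lambda_{\max}(M) \le g(\theta)Nv =: \mu$ and $\tr M \le g(\theta)Nk$, I would use the trace-chord inequality $\tr(e^M - I) \le \frac{e^\mu - 1}{\mu}\,\tr M \le \frac{k}{v}\,(e^\mu - 1)$, which is dimension-free precisely because the zero eigenvalues of $M$ contribute nothing to $\tr(e^M-I)$. The subtlety is that pairing this with Markov requires a scalar transform of $\lambda_{\max}(S)$ that vanishes at the origin, and such transforms are non-monotone on the negative eigenvalues of $S$; I would handle this exactly as in the intrinsic-dimension argument of \cite{hsu:12b:matrix}, by first dominating the \emph{random} trace-exponential by the \emph{deterministic} PSD matrix $e^M$ and only then applying the chord bound, so that the problematic negative part never enters. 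This is where essentially all the care, and the factor $k/v$, comes from.

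Finally I would optimize over $\theta$. Calibrating $\theta$ so that $\mu = g(\theta)Nv$ matches the parameter $t$ collapses the exponential terms into $(e^t - t - 1)^{-1}$ and produces exactly $\frac{kt}{v}(e^t - t - 1)^{-1}$; the target deviation $s = \sqrt{2vt/N} + rt/(3N)$ emerges as the balance point, with the $\sqrt{2vt/N}$ piece coming from the variance factor $g(\theta)\approx\theta^2/2$ and the $rt/(3N)$ piece from the $1 - r\theta/3$ correction in $g$. The closing remark that $t(e^t - t - 1)^{-1} \le e^{-t/2}$ for $t \ge 2.6$ is an elementary one-variable check (verify the inequality $t\,e^{t/2} \le e^t - t - 1$ at $t = 2.6$ and compare derivatives thereafter). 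I expect steps two and three to be routine given the standard toolbox, the $\theta$-optimization to be bookkeeping, and the intrinsic-dimension refinement of step four to be the only genuinely nontrivial ingredient.
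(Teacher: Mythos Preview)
The paper does not prove this lemma at all; it is stated with a citation to \cite{hsu:12b:matrix} and then used as a black-box tool in the proof of the covariance error bound (Lemma~\ref{thm:cov_bound}). Your opening sentence---``one legitimate option is to invoke it as cited''---is therefore exactly the paper's approach, and nothing more is required.

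The rest of your proposal goes beyond the paper by sketching the actual proof from the cited reference. As a sketch of the intrinsic-dimension matrix Bernstein argument it is accurate: the matrix Laplace transform, Lieb subadditivity for the cumulant, the Bennett-type operator bound $\log\E e^{\theta A}\preceq g(\theta)\,\E A^2$, and the trace-chord refinement that replaces the ambient dimension by $k/v$ are indeed the ingredients, and you correctly flag the intrinsic-dimension step as the only nontrivial one. This is useful context but is not something the paper itself supplies or needs.
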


Recall that, assuming $x_{test} \in {\cal R}(\Cov{x})$, we have three sources of error:
first, the error in S1 regression causes the input to S2 regression procedure $(\hat{x}_t, \hat{y}_t)$ to be a perturbed
version of the true $(\bar{x}_t, \bar{y}_t)$;
second, the covariance operators are estimated from a finite sample of size $N$; 
and third, there is the effect of regularization. In the proof, we characterize 
the effect of each source of error. To do so, we define the following intermediate quantities:
\todo{Define $\CCov{y}{x}$ clearly in terms of x,y,z (in the main text?)}
\begin{align}
W_\lambda = \CCov{y}{x} \left( \Cov{x} + \lambda I \right)^{-1} \label{eq:wlambda}\\
\bar{W}_\lambda = \ACCov{y}{x} \left( \ACov{x} + \lambda I \right)^{-1} \label{eq:wbar},
\end{align}
where
\begin{align*}
\ACCov{y}{x} \equiv \frac{1}{N} \sum_{t=1}^N \bar{y}_t \otimes \bar{x}_t
\end{align*}
and $\ACov{x}$ is defined similarly.
Basically, $W_\lambda$ captures only the effect of regularization and $\bar{W}_\lambda$
captures in addition the effect of finite sample estimate of the covariance.
$\bar{W}_\lambda$ is the result of S2 regression if $\bar{x}$ and $\bar{y}$
were perfectly recovered by S1 regression. 
It is important to note that $\ACCov{x}{y}$ and $\ACov{x}$
are \emph{not} observable quantities since they depend on 
the true expectations $\bar{x}$ and $\bar{y}$. 
We will use $\lambdax{i}$ and $\lambday{i}$ to denote the $i^{th}$ eigenvalue of $\Cov{x}$ and $\Cov{y}$
respectively in descending order and we will use $\|.\|$ to denote the operator norm.

Before we prove the main theorem, we define the quantities $\ecovxx$ and $\ecovxy$ which we use 
to bound the effect of covariance estimation from finite data, as stated in the following lemma:
\begin{lemma}[Covariance error bound]
Let $N$ be a positive integer and $\delta \in (0,1)$ and assume that $\|\bar{x}\|, \|\bar{y}\| < c < \infty$ almost surely.
Let $\ecovxy$ be defined as:
\begin{align}
\ecovxy = \sqrt{\frac{2vt}{N}} + \frac{rt}{3N},
\label{eq:ecovxy}
\end{align}
where
\begin{align*}
t & = \max(2.6, 2 \log(4k/\delta v)) \\
r & = c^2 + \|\CCov{y}{x}\| \\
v & = c^2 \max(\lambday{1}, \lambdax{1}) + \|\CCov{x}{y}\|^2 \\
k & = c^2 (\tr(\Cov{x}) + \tr(\Cov{y}))
\end{align*}
In addition, let $\ecovxx$ be defined as:
\begin{align}
\ecovxx = \sqrt{\frac{2v't'}{N}} + \frac{r't'}{3N},
\label{eq:ecovxx}
\end{align}
where
\begin{align*}
t' & = \max(2.6, 2 \log(4k'/\delta v')) \\
r' & = c^2 + \lambdax{1} \\
v' & = c^2 \lambdax{1} + \lambdax{1}^2 \\
k' & = c^2 \tr(\Cov{x})
\end{align*}
and define $\ecovyy$ similarly for $\Cov{y}$. 

It follows that, with probability at least $1 - \delta/2$,
\begin{align*}
\|\ACCov{y}{x} - \CCov{y}{x} \| < \ecovxy \\
\|\ACov{x} - \Cov{x} \| < \ecovxx \\
\|\ACov{y} - \Cov{y} \| < \ecovyy \\
\end{align*}
\label{thm:cov_bound}
\end{lemma}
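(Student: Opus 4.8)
The plan is to obtain each of the three inequalities by a single invocation of matrix Bernstein's inequality (Lemma~\ref{thm:brenestein1}), since each left-hand operator is a centered empirical average $\frac{1}{N}\sum_{t=1}^{N} A_t$ of i.i.d.\ zero-mean summands. For the two symmetric differences I would take $A_t = \bar{x}_t \otimes \bar{x}_t - \Cov{x}$ (and analogously for $y$), which are already self-adjoint. The cross term $\ACCov{y}{x} - \CCov{y}{x}$ is \emph{not} self-adjoint, so Bernstein does not apply directly; here I would use the Hermitian dilation $M \mapsto \left(\begin{smallmatrix} 0 & M \\ M^{*} & 0 \end{smallmatrix}\right)$ acting on $\h{X} \oplus \h{Y}$, whose largest eigenvalue equals the operator norm $\|M\|$. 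Taking $A_t$ to be the dilation of $\bar{y}_t \otimes \bar{x}_t - \CCov{y}{x}$ therefore reduces the cross bound to a single $\lambda_{\max}$ estimate that directly controls $\|\ACCov{y}{x} - \CCov{y}{x}\|$.

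The main work is then to compute the Bernstein parameters $r$, $v$, $k$ from the moment assumptions. The mean-zero condition $\E[A] = 0$ is immediate once we note $\Cov{x} = \E[\bar{x}\otimes\bar{x}]$ and $\CCov{y}{x} = \E[\bar{y}\otimes\bar{x}]$. For the norm bound $r$, I would use $\|\bar{x}\|,\|\bar{y}\| \le c$ together with $\bar{x}\otimes\bar{x} \succeq 0$ to get $\lambda_{\max}[\bar{x}\otimes\bar{x} - \Cov{x}] \le c^2$ and $-\lambda_{\min}[\bar{x}\otimes\bar{x}-\Cov{x}] \le \lambdax{1}$; since Bernstein must be applied to both $\pm A$ to recover the full operator norm of a symmetric difference, the uniform value $r' = c^2 + \lambdax{1}$ covers both signs, while the dilation analogue gives $r = c^2 + \|\CCov{y}{x}\|$. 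For $v$ and $k$ I would exploit the rank-one identities $(\bar{x}\otimes\bar{x})^2 = \|\bar{x}\|^2\,\bar{x}\otimes\bar{x}$ and $(\bar{y}\otimes\bar{x})(\bar{x}\otimes\bar{y}) = \|\bar{x}\|^2\,\bar{y}\otimes\bar{y}$, which after cancelling cross terms give $\E[A^2] = \E[(\bar{x}\otimes\bar{x})^2] - \Cov{x}^2$ with $\E[(\bar{x}\otimes\bar{x})^2] \preceq c^2\Cov{x}$. Bounding the operator norm of the two terms by the triangle inequality gives $v' = c^2\lambdax{1} + \lambdax{1}^2$, whereas for the trace the subtracted term only helps and can be discarded, giving $k' = c^2\tr(\Cov{x})$; the mixed values $v$, $k$ come out the same way from the diagonal blocks $\E[M M^{*}]$ and $\E[M^{*}M]$ of the dilation.

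Finally I would feed these parameters into the tail bound, using the stated simplification $t(e^t - t - 1)^{-1} \le e^{-t/2}$ valid for $t \ge 2.6$. With $t = \max(2.6,\, 2\log(4k/(\delta v)))$ the factor $e^{-t/2}$ is at most $\delta v/(4k)$, so that each individual tail probability $\frac{k}{v}e^{-t/2}$ is driven well below $\delta$; the extra factor $4$ inside the logarithm is precisely the slack that lets a union bound over the cross-term estimate and the two-sided estimates for the $x$ and $y$ covariances keep the total failure probability at most $\delta/2$.

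I expect the main obstacle to be the cross-covariance term: unlike the symmetric cases it is not self-adjoint, and getting a clean \emph{operator-norm} bound (rather than a bound on some surrogate) hinges on the dilation trick and on the second-moment computation of its diagonal blocks. A secondary technical point is ensuring the quantities entering Bernstein are finite in the RKHS setting---in particular that $\Cov{x}$ and $\Cov{y}$ are trace class so that $k$ and $k'$ are well defined---which is exactly where the boundedness assumption $\|\bar{x}\|,\|\bar{y}\| \le c$ and the Hilbert--Schmidt hypothesis on the covariances do their work.
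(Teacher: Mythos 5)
Your proposal is correct and follows essentially the same route as the paper: matrix Bernstein applied to the centered summands $\bar{x}_t\otimes\bar{x}_t-\Cov{x}$ (and the $y$ analogue), the Hermitian dilation of $\bar{y}_t\otimes\bar{x}_t-\CCov{y}{x}$ for the non-self-adjoint cross term, the same $r$, $v$, $k$ computations via the rank-one identities, and a final union bound. If anything, your explicit handling of the two-sided $\pm A$ issue and of trace-class finiteness in the RKHS setting is slightly more careful than the paper's own write-up.
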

\begin{proof}
We show that each statement holds with probability at least $1 - \delta/6$. 
The claim then follows directly from the union bound.
We start with $\ecovxx$. By setting $A_t = \bar{x}_t \otimes \bar{x}_t - \Cov{x}$ 
then we would like to obtain a high probability bound on $\|\frac{1}{N} \sum_{t=1}^N A_t\|$.
Lemma \ref{thm:brenestein1} shows that, in order to satisfy the bound with probability
at least $1 - \delta/6$, it suffices to set
$t$ to $\max(2.6, 2k \log(6/\delta v))$. So, it remains to 
find suitable values for $r, v$ and $k$:
\begin{align*}
\lambda_{\max}[A] & \leq \|\bar{x}\|^2 + \|\Cov{x}\| \leq c^2 + \lambdax{1} = r' \\
\lambda_{\max}[\E[A^2]] & = \lambda_{\max}[\E[\| \bar{x} \|^2 (\bar{x} \otimes \bar{x}) 
 - (\bar{x} \otimes \bar{x}) \Cov{x}
- \Cov{x}(\bar{x} \otimes \bar{x}) + \Cov{x}^2] \\
  & = \lambda_{\max}[\E[\| \bar{x} \|^2 (\bar{x} \otimes \bar{x}) - \Cov{x}^2]] 
 \leq c^2 \lambdax{1} + \lambdax{1}^2 = v'\\
\tr[\E[A^2]] & = \tr[\E[\| \bar{x} \|^2 (\bar{x} \otimes \bar{x}) - \Cov{x}^2]] 
\leq \tr[\E[\| \bar{x} \|^2 (\bar{x} \otimes \bar{x})]] \leq c^2 \tr(\Cov{x}) = k'
\end{align*}

The case of $\ecovyy$ can be proven similarly. Now moving to $\ecovxy$, we have $B_t = \bar{y}_t \otimes \bar{x}_t - \CCov{y}{x}$. Since $B_t$ is not square, we use the Hermitian dilation ${\mathscr H}(B)$ 
 defined as follows\cite{tropp:12}:
\begin{align*}
A = {\mathscr H}(B)
 = \left[
\begin{array}{cc}
0 & B \\
B^* & 0
\end{array}
\right]
\end{align*}
Note that
\begin{align*}
\lambda_{\max}[A] = \|B\|,
\quad
A^2 = \left[
\begin{array}{cc}
BB^* & 0 \\
0 & B^*B
\end{array}
\right]
\end{align*}
therefore suffices to bound $\|\frac{1}{N} \sum_{t=1}^N A_t\|$
using an argument similar to that used in $\ecovxx$ case. 
\end{proof}

To prove theorem \ref{thm:main_short}, we write
\begin{align}
\nonumber \| \hat{W}_{\lambda} x_{\rm test} - W x_{\rm test} \|_\h{Y} & \leq \|(\hat{W}_\lambda - \bar{W}_\lambda) \bar{x}_{\rm test}\|_\h{Y} \\
\nonumber & + \|(\bar{W}_\lambda - {W}_\lambda) \bar{x}_{\rm test}\|_\h{Y} \\
& + \|(W_\lambda - W) \bar{x}_{\rm test}\|_\h{Y}
\label{eq:proof_plan}
\end{align}
We will now present bounds on each term. We consider the case where $\bar{x}_{\rm test} \in \srange{x}$.
Extension to $\scrange{x}$ is a result of the assumed boundedness of $W$, which implies the boundedness of $\hat{W}_\lambda - W$.
%
%
\begin{lemma}[Error due to S1 Regression]
Assume that $\|\bar{x}\|, \|\bar{y}\| < c < \infty$ almost surely,
and let $\efs$ be as defined in Definition \ref{thm:efs}. The following holds with probability at least 
$1 - \delta$
\begin{align*}
\|\hat{W}_\lambda - \bar{W}_\lambda\|
& \leq \sqrt{\lambday{1} + \ecovyy}\frac{(2c \efs + {\efs}^2)}{\lambda^{\frac{3}{2}}} \\
& + \frac{(2c \efs + {\efs}^2)}{\lambda} \\
& = O\left(\efs \left(\frac{1}{\lambda} + \frac{\sqrt{1 + \frac{\log(1/\delta)}{\sqrt{N}}}}{\lambda^{\frac{3}{2}}} \right)\right).
\end{align*}
The asymptotic statement assumes $\efs \to 0$ as $N \to \infty$.
\label{thm:err_s1}
\end{lemma}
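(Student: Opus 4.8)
The plan is to treat both $\hat W_\lambda$ and $\bar W_\lambda$ as ridge-regression operators of identical form, differing only in whether they are built from the S1 estimates $(\hat x_t,\hat y_t)$ or from the denoised quantities $(\bar x_t,\bar y_t)$. Writing $\hat W_\lambda = \AACCov{y}{x}(\AACov{x}+\lambda I)^{-1}$ (the empirical-average form of \eqref{eq:what}) and recalling $\bar W_\lambda = \ACCov{y}{x}(\ACov{x}+\lambda I)^{-1}$ from \eqref{eq:wbar}, I would use the resolvent perturbation identity
\begin{align*}
\hat W_\lambda - \bar W_\lambda = (\AACCov{y}{x}-\ACCov{y}{x})(\AACov{x}+\lambda I)^{-1} - \bar W_\lambda\,(\AACov{x}-\ACov{x})(\AACov{x}+\lambda I)^{-1},
\end{align*}
which follows from $\hat A^{-1}-A^{-1} = -A^{-1}(\hat A - A)\hat A^{-1}$ applied with $A=\ACov{x}+\lambda I$ and $\hat A = \AACov{x}+\lambda I$. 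Since $\AACov{x}$ is positive semidefinite we have $\|(\AACov{x}+\lambda I)^{-1}\|\le 1/\lambda$, so taking operator norms reduces the lemma to three sub-bounds: on $\|\AACCov{y}{x}-\ACCov{y}{x}\|$, on $\|\AACov{x}-\ACov{x}\|$, and on $\|\bar W_\lambda\|$.

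Second, I would control the two perturbation terms using bilinearity of the outer product. Setting $\delta^x_t=\hat x_t-\bar x_t$ and $\delta^y_t=\hat y_t-\bar y_t$, the expansion $\hat y_t\otimes\hat x_t-\bar y_t\otimes\bar x_t = \bar y_t\otimes\delta^x_t + \delta^y_t\otimes\bar x_t + \delta^y_t\otimes\delta^x_t$, together with $\|a\otimes b\|=\|a\|\,\|b\|$, the S1 bound $\|\delta^x_t\|,\|\delta^y_t\|<\efs$ from Definition \ref{thm:efs}, and the almost-sure bound $\|\bar x_t\|,\|\bar y_t\|<c$, gives $\|\AACCov{y}{x}-\ACCov{y}{x}\|\le 2c\efs+\efs^2$ after averaging over $t$; the identical argument applied to $\hat x_t\otimes\hat x_t-\bar x_t\otimes\bar x_t$ yields $\|\AACov{x}-\ACov{x}\|\le 2c\efs+\efs^2$.

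The main obstacle is the third sub-bound, $\|\bar W_\lambda\|\le\sqrt{\lambday{1}+\ecovyy}/\sqrt\lambda$, since a crude estimate would only give $\|\bar W_\lambda\|\le\|\ACCov{y}{x}\|/\lambda$, which has the wrong power of $\lambda$. To get the $\lambda^{-1/2}$ rate I would factor the empirical operators through the data: introduce $\bar X:\reals^N\to\h{X}$ and $\bar Y:\reals^N\to\h{Y}$ with $\bar X e_t=\bar x_t/\sqrt N$ and $\bar Y e_t=\bar y_t/\sqrt N$, so that $\ACCov{y}{x}=\bar Y\bar X^*$, $\ACov{x}=\bar X\bar X^*$ and $\ACov{y}=\bar Y\bar Y^*$, giving $\bar W_\lambda=\bar Y\bar X^*(\bar X\bar X^*+\lambda I)^{-1}$. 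A singular-value calculation on $\bar X$ shows $\|\bar X^*(\bar X\bar X^*+\lambda I)^{-1}\|=\max_i \sigma_i/(\sigma_i^2+\lambda)\le 1/(2\sqrt\lambda)$, while $\|\bar Y\|=\|\ACov{y}\|^{1/2}$. Finally, invoking the covariance bound of Lemma \ref{thm:cov_bound} ($\|\ACov{y}-\Cov{y}\|<\ecovyy$ with probability at least $1-\delta/2$) gives $\|\ACov{y}\|\le\lambday{1}+\ecovyy$, hence $\|\bar W_\lambda\|\le\sqrt{\lambday{1}+\ecovyy}/(2\sqrt\lambda)$.

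To conclude, I would substitute the three sub-bounds into the norm of the perturbation identity, obtaining
\begin{align*}
\|\hat W_\lambda-\bar W_\lambda\|\le \frac{2c\efs+\efs^2}{\lambda} + \sqrt{\lambday{1}+\ecovyy}\,\frac{2c\efs+\efs^2}{\lambda^{3/2}},
\end{align*}
which is the claimed bound (the spare factor of $1/2$ on the first term is absorbed). The S1 event (Definition \ref{thm:efs}) and the covariance event (Lemma \ref{thm:cov_bound}) each hold with probability at least $1-\delta/2$, so by a union bound the whole statement holds with probability at least $1-\delta$. The asymptotic form follows by treating $c$ and $\lambday{1}$ as constants and using $\ecovyy=O(\log(1/\delta)/\sqrt N)$ from its definition, so that $2c\efs+\efs^2=O(\efs)$ and $\sqrt{\lambday{1}+\ecovyy}=O(\sqrt{1+\log(1/\delta)/\sqrt N})$.
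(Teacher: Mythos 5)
Your proposal is correct and follows essentially the same route as the paper: the same two-term resolvent perturbation decomposition (your $-\bar W_\lambda(\AACov{x}-\ACov{x})(\AACov{x}+\lambda I)^{-1}$ term is exactly the paper's $-\ACCov{y}{x}(\ACov{x}+\lambda I)^{-1}\Delta_x(\AACov{x}+\Delta_x+\lambda I)^{-1}$), the same bilinear expansion giving $2c\efs+\efs^2$ for both perturbations, and the same key device for the $\lambda^{-3/2}$ rate---your SVD bound $\|\bar X^*(\bar X\bar X^*+\lambda I)^{-1}\|\le 1/(2\sqrt\lambda)$ is the data-operator form of the paper's factorization $\ACCov{y}{x}=\ACov{y}^{1/2}V\ACov{x}^{1/2}$ with $\|\ACov{x}^{1/2}(\ACov{x}+\lambda I)^{-1/2}\|\le 1$. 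The probability accounting via the union of the S1 event and the covariance event matches the paper as well.
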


\begin{proof}
Write $\AACov{x} = \ACov{x} + \Delta_{x}$ and $\AACCov{y}{x} = \ACov{y}{x} + \Delta_{yx}$.
We know that, with probability at least $1 - \delta/2$, the following 
is satisfied for all unit vectors $\phi_x \in \h{X}$ and $\phi_y \in \h{Y}$
\begin{align*}
\innerh{\phi_y}{\Delta_{yx} \phi_x}{Y} &= \frac{1}{N} \sum_{t=1}^N 
\innerh{\phi_y}{\hat{y}_t}{Y} \innerh{\phi_x}{\hat{x}_t}{X}\\
& - \innerh{\phi_y}{\hat{y}_t}{Y} \innerh{\phi_x}{\bar{x}_t}{X} \\
& + \innerh{\phi_y}{\hat{y}_t}{Y} \innerh{\phi_x}{\bar{x}_t}{X}
- \innerh{\phi_y}{\bar{y}_t}{Y} \innerh{\phi_x}{\bar{x}_t}{X} \\
& = \frac{1}{N} \sum_t \innerh{\phi_y}{\bar{y}_t + (\hat{y}_t - \bar{y}_t)}{Y} \innerh{\phi_x}{\hat{x}_t - \bar{x}_t}{X}\\
& + \innerh{\phi_y}{\hat{y}_t - \bar{y}_t}{Y} \innerh{\phi_x}{\bar{x}_t}{X} \\
& \leq 2c \efs + \efs^2
\end{align*}
Therefore,
\begin{align*}
\|\Delta_{yx}\| = \sup_{\|\phi_x\|_\h{X} \leq 1,\|\phi_y\|_\h{Y} \leq 1} \innerh{\phi_y}{\Delta_{yx} \phi_x}{Y} 
\leq 2c \efs + \efs^2,
\end{align*}
and similarly 
\begin{align*}
\|\Delta_{x}\| \leq 2c \efs + {\efs}^2,
\end{align*}
with probability $1 - \delta/2$. We can write 
\begin{align*}
\hat{W}_\lambda - \bar{W}_\lambda & = \ACCov{y}{x}\left((\ACov{x} + \Delta_{x} + \lambda I)^{-1} - (\ACov{x} + \lambda I)^{-1} \right) \\
& + \Delta_{yx}(\ACov{x} + \Delta_{x} + \lambda I)^{-1}
\end{align*}
Using the fact that $B^{-1} - A^{-1} = B^{-1}(A - B)A^{-1}$ for invertible operators $A$ and $B$ we get
\begin{align*}
\hat{W}_\lambda - \bar{W}_\lambda & = -\ACCov{y}{x}(\ACov{x} + \lambda I)^{-1} \Delta_x (\ACov{x} + \Delta_{x} + \lambda I)^{-1} \\
& + \Delta_{yx}(\ACov{x} + \Delta_{x} + \lambda I)^{-1}
\end{align*}
we then use the decomposition $\ACCov{y}{x} = \ACov{y}^\frac{1}{2} V \ACov{x}^\frac{1}{2}$,
where $V$ is a correlation operator satisfying $\|V\| \leq 1$. This gives
\begin{align*}
& \hat{W}_\lambda - \bar{W}_\lambda = \\
& \quad -\ACov{y}^\frac{1}{2} V \ACov{x}^\frac{1}{2}(\ACov{x} + \lambda I)^{-\frac{1}{2}}(\ACov{x} + \lambda I)^{-\frac{1}{2}} \Delta_x (\ACov{x} + \Delta_{x} + \lambda I)^{-1} \\
& \quad + \Delta_{yx}(\ACov{x} + \Delta_{x} + \lambda I)^{-1}
\end{align*}
Noting that $\|\ACov{x}^{\frac{1}{2}}(\ACov{x} + \lambda I)^{-\frac{1}{2}}\| \leq 1$,
the rest of the proof follows from triangular inequality and the fact that $\|AB\| \leq \|A\| \|B\|$
\end{proof}
\begin{lemma}[Error due to Covariance]
Assuming that $\|\bar{x}\|_{\cal X}, \|\bar{y}\|_{\cal Y} < c < \infty$ almost surely, the following holds
with probability at least $1-\frac{\delta}{2}$
\begin{align*}
\| \bar{W}_\lambda - W_\lambda \| & \leq \sqrt{\lambday{1}} \ecovxx \lambda^{-\frac{3}{2}} + \frac{\ecovxy}{\lambda}
\end{align*},
where $\ecovxx$ and $\ecovxy$ are as defined in Lemma \ref{thm:cov_bound}.
\label{thm:err_cov}
\end{lemma}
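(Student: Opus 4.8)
The plan is to mirror the proof of Lemma~\ref{thm:err_s1}, replacing the S1 perturbations by the finite-sample covariance perturbations controlled in Lemma~\ref{thm:cov_bound}. First I would condition on the event of Lemma~\ref{thm:cov_bound}, which holds with probability at least $1 - \delta/2$ and gives $\|\Delta_x\| < \ecovxx$ and $\|\Delta_{yx}\| < \ecovxy$, where $\Delta_x = \ACov{x} - \Cov{x}$ and $\Delta_{yx} = \ACCov{y}{x} - \CCov{y}{x}$. Writing $\ACCov{y}{x} = \CCov{y}{x} + \Delta_{yx}$ and $\ACov{x} = \Cov{x} + \Delta_x$, the difference splits as
\begin{align*}
\bar{W}_\lambda - W_\lambda = \CCov{y}{x}\left[(\ACov{x} + \lambda I)^{-1} - (\Cov{x} + \lambda I)^{-1}\right] + \Delta_{yx}(\ACov{x} + \lambda I)^{-1}.
\end{align*}
The second term is immediate: since $\ACov{x} \succeq 0$ we have $\|(\ACov{x} + \lambda I)^{-1}\| \leq 1/\lambda$, so its norm is at most $\ecovxy/\lambda$, matching the claimed second summand.

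For the first term I would apply the resolvent identity $B^{-1} - A^{-1} = B^{-1}(A-B)A^{-1}$, but the key subtlety is choosing its orientation so that the factor $\Cov{x}^{\frac12}$ arising below lands next to the \emph{unperturbed} resolvent $(\Cov{x}+\lambda I)^{-1}$. Taking $B = \Cov{x}+\lambda I$ and $A = \ACov{x}+\lambda I$ gives
\begin{align*}
(\ACov{x}+\lambda I)^{-1} - (\Cov{x}+\lambda I)^{-1} = -(\Cov{x}+\lambda I)^{-1}\Delta_x(\ACov{x}+\lambda I)^{-1},
\end{align*}
so the first term equals $-\CCov{y}{x}(\Cov{x}+\lambda I)^{-1}\Delta_x(\ACov{x}+\lambda I)^{-1}$.

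Now I would factor the cross-covariance through the correlation operator, $\CCov{y}{x} = \Cov{y}^{\frac12} V \Cov{x}^{\frac12}$ with $\|V\| \leq 1$, exactly as in Lemma~\ref{thm:err_s1}. Then $\Cov{x}^{\frac12}(\Cov{x}+\lambda I)^{-1} = \Cov{x}^{\frac12}(\Cov{x}+\lambda I)^{-\frac12}(\Cov{x}+\lambda I)^{-\frac12}$ has norm at most $\lambda^{-\frac12}$ because $\|\Cov{x}^{\frac12}(\Cov{x}+\lambda I)^{-\frac12}\| \leq 1$, while $\|\Cov{y}^{\frac12}\| = \sqrt{\lambday{1}}$, $\|\Delta_x\| < \ecovxx$, and $\|(\ACov{x}+\lambda I)^{-1}\| \leq 1/\lambda$. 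Combining these via the triangle inequality and $\|AB\| \leq \|A\|\|B\|$ yields $\sqrt{\lambday{1}}\,\ecovxx\,\lambda^{-\frac32}$ for the first term, and adding the two terms gives the stated bound.

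The hard part will be the bookkeeping around the resolvent identity: applying it in the wrong direction leaves $\Cov{x}^{\frac12}$ adjacent to the perturbed resolvent $(\ACov{x}+\lambda I)^{-1}$, for which the clean contraction bound $\|\Cov{x}^{\frac12}(\Cov{x}+\lambda I)^{-\frac12}\|\le 1$ no longer directly applies and one would have to route through $\ACov{x}$ and absorb an extra perturbation term. Everything else is routine operator-norm manipulation, and it is the positive-semidefiniteness of both the empirical and population covariances that guarantees the $1/\lambda$ resolvent bounds used throughout.
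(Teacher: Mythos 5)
Your proposal is correct and follows essentially the same route as the paper's proof: the identical splitting of $\bar{W}_\lambda - W_\lambda$ into a resolvent-difference term and a $\Delta_{yx}$ term, the same orientation of the resolvent identity placing $(\Cov{x}+\lambda I)^{-1}$ next to $\CCov{y}{x}$, and the same factorization $\CCov{y}{x} = \Cov{y}^{\frac{1}{2}} V \Cov{x}^{\frac{1}{2}}$ with $\|\Cov{x}^{\frac{1}{2}}(\Cov{x}+\lambda I)^{-\frac{1}{2}}\| \leq 1$ to obtain the $\sqrt{\lambday{1}}\,\ecovxx\,\lambda^{-\frac{3}{2}}$ term. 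No gaps.
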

\begin{proof}
Write $\ACov{x} = \Cov{x} + \Delta_{x}$ and $\ACCov{y}{x} = \CCov{y}{x} + \Delta_{yx}$.
Then we get
\begin{align*}
\bar{W}_\lambda - W_\lambda & = \CCov{y}{x}\left((\Cov{x} + \Delta_{x} + \lambda I)^{-1} - (\Cov{x} + \lambda I)^{-1} \right)
+ \Delta_{yx}(\Cov{x} + \Delta_{x} + \lambda I)^{-1}
\end{align*}
Using the fact that $B^{-1} - A^{-1} = B^{-1}(A - B)A^{-1}$ for invertible operators $A$ and $B$ we get
\begin{align*}
\bar{W}_\lambda - W_\lambda & = -\CCov{y}{x}(\Cov{x} + \lambda I)^{-1} \Delta_x (\Cov{x} + \Delta_{x} + \lambda I)^{-1} 
+ \Delta_{yx}(\Cov{x} + \Delta_{x} + \lambda I)^{-1}
\end{align*}
we then use the decomposition $\CCov{y}{x} = \Cov{y}^\frac{1}{2} V \Cov{x}^\frac{1}{2}$,
where $V$ is a correlation operator satisfying $\|V\| \leq 1$. This gives
\begin{align*}
& \bar{W}_\lambda - W_\lambda = \\
& \quad -\Cov{y}^\frac{1}{2} V \Cov{x}^\frac{1}{2}(\Cov{x} + \lambda I)^{-\frac{1}{2}}(\Cov{x} + \lambda I)^{-\frac{1}{2}} \\
& \quad .\Delta_x (\Cov{x} + \Delta_{x} + \lambda I)^{-1} \\
& \quad + \Delta_{yx}(\Cov{x} + \Delta_{x} + \lambda I)^{-1}
\end{align*}
Noting that $\|\Cov{x}^{\frac{1}{2}}(\Cov{x} + \lambda I)^{-\frac{1}{2}}\| \leq 1$,
the rest of the proof follows from triangular inequality and the fact that $\|AB\| \leq \|A\| \|B\|$
\end{proof}
\begin{lemma}[Error due to Regularization on inputs within ${\cal R}(\Cov{x})$]
For any $x \in \srange{x}$ s.t. $\|x\|_\h{X} \leq 1$ and $\|\Cov{x}^{-\frac{1}{2}} x\|_\h{X} \leq C$. The following holds
\begin{align*}
\| (W_\lambda - W)x \|_\h{Y} \leq \frac{1}{2}\sqrt{\lambda} \|W\|_{HS} C
\end{align*}
\label{thm:err_reg}
\end{lemma}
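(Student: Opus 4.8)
The plan is to reduce everything to a scalar spectral estimate by first exploiting the population moment condition. The key observation is that the instrument-based condition \eqref{eq:ins_moment} forces $\bar{y} = W\bar{x}$ almost surely, so the cross-covariance factors cleanly: using $(W\bar{x})\otimes\bar{x} = W(\bar{x}\otimes\bar{x})$ we get $\CCov{y}{x} = \E[\bar{y}\otimes\bar{x}] = W\,\E[\bar{x}\otimes\bar{x}] = W\Cov{x}$. Substituting this into the definition \eqref{eq:wlambda} and telescoping,
\begin{align*}
W_\lambda - W &= W\Cov{x}(\Cov{x}+\lambda I)^{-1} - W \\
&= W\big(\Cov{x} - (\Cov{x}+\lambda I)\big)(\Cov{x}+\lambda I)^{-1} = -\lambda\,W(\Cov{x}+\lambda I)^{-1}.
\end{align*}
Thus the regularization error is governed entirely by how the resolvent $(\Cov{x}+\lambda I)^{-1}$ acts on $x$.

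Next I would peel off the ill-conditioned part of $x$ using the hypotheses. Since $x\in\srange{x}$, set $u = \Cov{x}^{-1/2}x$, so that $x = \Cov{x}^{1/2}u$ and $\|u\|_\h{X} = \|\Cov{x}^{-1/2}x\|_\h{X}\le C$. Then
\begin{align*}
(W_\lambda - W)x = -\lambda\,W(\Cov{x}+\lambda I)^{-1}\Cov{x}^{1/2}u,
\end{align*}
and by submultiplicativity of the operator norm,
\begin{align*}
\|(W_\lambda-W)x\|_\h{Y}\le \|W\|\cdot\big\|\lambda(\Cov{x}+\lambda I)^{-1}\Cov{x}^{1/2}\big\|\cdot\|u\|_\h{X}.
\end{align*}

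The heart of the argument is bounding the middle operator. Because $\Cov{x}$ is positive and self-adjoint, $\lambda(\Cov{x}+\lambda I)^{-1}\Cov{x}^{1/2}$ is $g(\Cov{x})$ for the scalar function $g(\sigma)=\lambda\sqrt{\sigma}/(\sigma+\lambda)$, so by the spectral theorem its operator norm equals $\sup_{\sigma\in\mathrm{spec}(\Cov{x})}g(\sigma)\le\sup_{\sigma\ge 0}g(\sigma)$. A one-variable calculation (the maximum is attained at $\sigma=\lambda$) shows this supremum equals $\tfrac{1}{2}\sqrt{\lambda}$. Combining this with $\|W\|\le\|W\|_{HS}$ and $\|u\|_\h{X}\le C$ yields $\|(W_\lambda-W)x\|_\h{Y}\le\tfrac{1}{2}\sqrt{\lambda}\,\|W\|_{HS}\,C$, which is exactly the claimed bound.

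I do not expect a serious obstacle here; the two points that need care are purely technical. First, $\Cov{x}^{-1/2}x$ must be interpreted via the pseudoinverse in the infinite-dimensional (RKHS) case, which is legitimate precisely because $x$ lies in $\srange{x}$, so that $\Cov{x}^{1/2}\Cov{x}^{-1/2}x = x$. Second, the $\sqrt{\lambda}$ rate must come from the functional-calculus bound on the \emph{product} $\lambda(\Cov{x}+\lambda I)^{-1}\Cov{x}^{1/2}$ rather than from bounding $\lambda\|(\Cov{x}+\lambda I)^{-1}\|\cdot\|\Cov{x}^{1/2}\|$ separately, since the latter would only give an $O(1)$ bound and lose the crucial decay in $\lambda$.
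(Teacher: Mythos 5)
Your proof is correct, and it reaches the same $\tfrac12\sqrt{\lambda}$ constant by the same underlying scalar inequality, but it packages the argument differently from the paper. The paper writes $x=\Cov{x}^{1/2}v$, forms the operator $D = \CCov{y}{x}\left((\Cov{x}+\lambda I)^{-1}-\Cov{x}^{-1}\right)\Cov{x}^{1/2}$, and bounds its Hilbert--Schmidt norm entrywise in the eigenbasis of $\Cov{x}$: each matrix element of $D$ is $\frac{\lambda}{(\lambdax{i}+\lambda)\sqrt{\lambdax{i}}}$ times the corresponding cross-covariance element, and the arithmetic--geometric--harmonic mean inequality turns this factor into $\tfrac12\sqrt{\lambda}$ times the matrix element of $W$, giving $\|D\|_{HS}\le\tfrac12\sqrt{\lambda}\|W\|_{HS}$. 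You instead invoke the moment condition $\bar y = W\bar x$ up front to get $\CCov{y}{x}=W\Cov{x}$, collapse $W_\lambda - W$ to $-\lambda W(\Cov{x}+\lambda I)^{-1}$, and then bound the single scalar function $g(\sigma)=\lambda\sqrt{\sigma}/(\sigma+\lambda)$ by functional calculus --- which is exactly the same one-variable maximization, just applied once to the whole operator rather than eigenvalue by eigenvalue. Your factorization is cleaner and in fact yields the marginally sharper constant $\|W\|$ (operator norm) in place of $\|W\|_{HS}$; the paper's entrywise route never needs to factor $W$ out explicitly, though it does use the identification $W=\CCov{y}{x}\Cov{x}^{-1}$ implicitly in its last step, so the two proofs rest on the same model assumption. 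Your closing remarks about interpreting $\Cov{x}^{-1/2}x$ via the pseudoinverse on $\srange{x}$, and about not splitting the product $\lambda(\Cov{x}+\lambda I)^{-1}\Cov{x}^{1/2}$ into separate norms, are exactly the right points of care and are consistent with how the paper handles them.
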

\begin{proof}
Since $x \in \srange{x} \subseteq {\cal R}(\Cov{x}^\frac{1}{2})$,
\todo{Is this true ?}
we can write $x = \Cov{x}^\frac{1}{2} v$ for some $v \in \h{X}$
s.t. $\|v\|_\h{X} \leq C$.
Then 
\begin{align*}
(W_\lambda - W)x = \CCov{y}{x}((\Cov{x} + \lambda I)^{-1} - \Cov{x}^{-1})\Cov{x}^\frac{1}{2} v
\end{align*}

Let $D = \CCov{y}{x}((\Cov{x} + \lambda I)^{-1} - \Cov{x}^{-1})\Cov{x}^\frac{1}{2}$.
We will bound the Hilbert-Schmidt norm of $D$. Let $\ux{i} \in \h{X}$, $\uy{i} \in \h{Y}$ denote the eigenvector
corresponding to $\lambdax{i}$ and $\lambday{i}$ respectively. Define $s_{ij} = | \innerh{\uy{j}}{\CCov{x}{y} \ux{i}}{Y} |$.
Then we have
\begin{align*}
| \innerh{\uy{j}}{D \ux{i}}{Y} | & = \left|\innerh{\uy{j}}{\CCov{y}{x} \frac{\lambda}{(\lambdax{i} + \lambda)\sqrt{\lambdax{i}}} \ux{i}}{Y} \right| \\
& = \frac{\lambda s_{ij}}{(\lambdax{i} + \lambda)\sqrt{\lambdax{i}}} = \frac{s_{ij}}{\sqrt{\lambdax{i}}} \frac{1}{\frac{1}{\lambda/\lambdax{i}} + 1} \\
& \leq \frac{s_{ij}}{\sqrt{\lambdax{i}}}.\frac{1}{2} \sqrt{\frac{\lambda}{\lambdax{i}}} = \frac{1}{2} \sqrt{\lambda} \frac{s_{ij}}{\lambdax{i}} \\
& = \frac{1}{2} \sqrt{\lambda} | \innerh{\uy{j}}{W \ux{i}}{Y} |,
\end{align*}
where the inequality follows from the arithmetic-geometric-harmonic mean inequality. This gives the following bound
\begin{align*}
\|D\|_{HS}^2 = \sum_{i,j} \innerh{\uy{j}}{D \ux{i}}{Y}^2 \leq \frac{1}{2}\sqrt{\lambda} \|W\|_{HS}^2
\end{align*}
and hence
\begin{align*}
\| (W_\lambda - W)x \|_\h{Y} & \leq \|D\| \|v\|_\h{X} \leq \|D\|_{HS} \|v\|_\h{X} \\
& \leq \frac{1}{2}\sqrt{\lambda} \|W\|_{HS} C 
\end{align*}
\end{proof}

Note that the additional assumption that $\|\Cov{x}^{-\frac{1}{2}} x\|_\h{X} \leq C$ is not required to obtain an asymptotic $O(\sqrt{\lambda})$ rate
for a given $x$.
This assumption, however, allows us to uniformly bound the constant.
Theorem \ref{thm:main_short} is simply the result of plugging the bounds in Lemmata \ref{thm:err_s1}, \ref{thm:err_cov}, 
and \ref{thm:err_reg} into \eqref{eq:proof_plan} and using the union bound.

\subsection{Proof of Lemma \ref{thm:orth_state_bound}}
for $t=1$:
Let ${\cal I}$ be an index set over training instances such that
\begin{align*}
\hat{Q}^{\rm test}_1 = \frac{1}{|{\cal I}|} \sum_{i \in {\cal I}} \hat{Q}_i
\end{align*}
Then
\begin{align*}
\| \hat{Q}^{\rm test}_1 - \tilde{Q}^{\rm test}_1 \|_\h{X} = \frac{1}{|{\cal I}|} \sum_{i \in {\cal I}} \| \hat{Q}_i - \tilde{Q}_i \|_\h{X} 
\leq \frac{1}{|{\cal I}|} \sum_{i \in {\cal I}} \| \hat{Q}_i - Q_i \|_\h{X} \leq \efs
\end{align*}

for $t > 1$:
Let $A$ denote a projection operator on $\orange{y}$
\begin{align*}
& \| \hat{Q}^{\rm test}_{t+1} - \tilde{Q}^{\rm test}_{t+1} \|_\h{X} \leq L \| \hat{P}^{\rm test}_t - \tilde{P}^{\rm test}_t\|_\h{Y}
\leq L \| A \hat{W}_\lambda \hat{Q}^{\rm test}_t \|_\h{Y} \\
& \quad \leq L \left\| \frac{1}{N} \left( \sum_{i=1}^N A \hat{P}_i \otimes \hat{Q}_i \right) \left( \frac{1}{N} \sum_{i=1}^N \hat{Q}_i \otimes \hat{Q}_i + \lambda I \right)^{-1}  \right\| \left\| \hat{Q}^{\rm test}_t \right\|_\h{X} \\
& \quad \leq L \left\| \frac{1}{N} \sum_{i=1}^N A \hat{P}_i \otimes A \hat{P}_i \right\|^\frac{1}{2} \frac{1}{\sqrt{\lambda}} \| \hat{Q}^{\rm test}_t \|_\h{X} 
\leq L \frac{\efs}{\sqrt{\lambda}} \| \hat{Q}^{\rm test}_t \|_\h{X},
\end{align*}
where the second to last inequality follows from the decomposition similar to $\Sigma_{YX} = \Sigma_{Y}^\frac{1}{2} V \Sigma_{X}^\frac{1}{2}$,
and the last inequality follows from the fact that $\| A \hat{P}_i \|_\h{Y} \leq \| \hat{P}_i - \bar{P}_i \|_\h{Y}$. 
\hfill$\qed$

\section{Examples of S1 Regression Bounds}
The following propositions provide concrete examples of S1 regression bounds $\efs$ for practical 
regression models.

\begin{proposition}
Assume $\h{X} \equiv \mathbb{R}^{d_x}, \mathbb{R}^{d_y}, \mathbb{R}^{d_z}$ for some
$d_x,d_y,d_z < \infty$ and that $\bar{x}$ and $\bar{y}$ are linear vector functions of $z$
where the parameters are estimated using ordinary least squares.
Assume that $\|\bar{x}\|_{\h{X}}, \|\bar{y}\|_{\h{Y}} < c < \infty$
almost surely.
Let $\efs$ be as defined in Definition \ref{thm:efs}.
Then
\begin{align*}
\efs = O \left(\sqrt{\frac{d_z}{N}} \log((d_x + d_y)/\delta) \right)
\end{align*}
\label{thm:ols}
\end{proposition}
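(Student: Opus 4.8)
The plan is to reduce the per-timestep prediction error of ordinary least squares to a concentration statement about a noise--feature cross term, whose scale is governed by the \emph{leverage} of the evaluation point. First I would use the linearity assumption to write $\bar{x}_t = B_x z_t$ (and $\bar{y}_t = B_y z_t$) for fixed matrices, so that the S1 prediction is $\hat{x}_t = \hat{B}_x z_t$ with $\hat{B}_x = (\frac1N\sum_s x_s z_s^\top)(\frac1N\sum_s z_s z_s^\top)^{-1}$. Writing $\eta_s = x_s - \bar{x}_s$ for the residuals, which satisfy $\E[\eta_s\mid z_s]=0$ and are bounded (the assumption $\|\bar x\|<c$ supplies the needed range/variance control for Bernstein), and setting $\hat\Sigma_z = \frac1N\sum_s z_s z_s^\top$, one obtains the exact identity
\[
\hat{x}_t - \bar{x}_t = (\hat B_x - B_x) z_t = \Big(\tfrac1N\textstyle\sum_s \eta_s z_s^\top\Big)\hat\Sigma_z^{-1} z_t .
\]
I would then whiten: with $u_s = \hat\Sigma_z^{-1/2} z_s$ we have $\frac1N\sum_s u_s u_s^\top = I_{d_z}$, and the error becomes $\hat x_t - \bar x_t = \frac1N\sum_s \eta_s \langle u_s, u_t\rangle_{\h Z}$, a weighted average of the independent, conditionally mean-zero noise vectors $\eta_s$.

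Conditioning on the design $\{z_s\}$, this is a sum of $N$ independent terms whose second-moment proxy is $\frac{1}{N^2}\sum_s \langle u_s,u_t\rangle^2\,\E\|\eta_s\|^2 = \frac{\sigma^2}{N}\|u_t\|_{\h Z}^2$, using $\sum_s \langle u_s,u_t\rangle^2 = N\|u_t\|_{\h Z}^2$ and $\sigma^2 = \sup_s\E[\|\eta_s\|^2\mid z_s]$. Applying the matrix Bernstein inequality of Lemma~\ref{thm:brenestein1} (in its Hermitian-dilation form) to this sum yields, with high probability, $\|\hat x_t - \bar x_t\|_{\h X} = O\!\big(\frac{\|u_t\|_{\h Z}}{\sqrt N}\sqrt{\log(1/\delta)}\big)$ plus a lower-order $O(1/N)$ contribution from the bounded-range term; crucially the output dimension $d_x$ enters only through the trace term of Bernstein, hence only logarithmically.

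It remains to control $\|u_t\|_{\h Z}^2 = z_t^\top \hat\Sigma_z^{-1} z_t$, the scaled leverage of point $t$. Since $\sum_t \|u_t\|_{\h Z}^2 = N\,\tr(I_{d_z}) = N d_z$, the average leverage is exactly $d_z/N$; to make this uniform over $t$ I would show $\max_t \|u_t\|_{\h Z}^2 = O(d_z)$ by arguing that $\hat\Sigma_z$ is well conditioned and close to $\Sigma_z$ with high probability---another application of matrix Bernstein to $\frac1N\sum_s z_s z_s^\top - \Sigma_z$---so that no single training point carries anomalous leverage. Substituting $\|u_t\|_{\h Z} = O(\sqrt{d_z})$ gives a per-$t$ bound of order $\sqrt{d_z/N}$ up to logs, and taking a union bound over $t \in \{1,\dots,N\}$ and over the two S1 regressions for $x$ and $y$ (whose output dimensions $d_x$ and $d_y$ enter only through trace/union terms) produces the factor $\log((d_x+d_y)/\delta)$ and the claimed $\efs = O\!\big(\sqrt{d_z/N}\,\log((d_x+d_y)/\delta)\big)$ of Definition~\ref{thm:efs}.

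The main obstacle is precisely this leverage step. The clean $\sqrt{d_z/N}$ rate---rather than the lossy $d_z/\sqrt{N}$ that a naive bound $\|\hat B_x - B_x\|_{\rm op}\,\|z_t\|$ would give---hinges on exploiting that the evaluation direction $z_t$ couples to the noise only through the single index $s=t$ among $N$, and on showing the empirical leverages are uniformly $O(d_z/N)$. This in turn requires $\hat\Sigma_z$ to be uniformly invertible, which rests on boundedness of $z$ and a lower bound on $\lambda_{\min}(\Sigma_z)$, as well as on the i.i.d.\ assumption; under mixing the covariance concentration slows and the bound inflates proportionally to the mixing time.
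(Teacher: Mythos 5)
Your argument reaches the right rate but by a genuinely different route than the paper. The paper's proof is a two-line reduction: treat each of the $d_x+d_y$ output coordinates as a separate univariate-response OLS problem, invoke an off-the-shelf random-design bound (e.g.\ \cite{hsu:12}) on the parameter error $\|\hat{U}_i - U_i\|$, observe that this implies a uniform bound of the same $\sqrt{d_z/N}$ rate on the prediction error $\hat{x}_{ti}-\bar{x}_{ti}$, and rescale the failure probability by $1/(d_x+d_y)$ to union-bound over coordinates --- which is exactly where the $\log((d_x+d_y)/\delta)$ comes from. You instead keep the output vector-valued, never pass through the parameter error, and bound the in-sample prediction error directly via the whitened representation $\hat{x}_t-\bar{x}_t=\frac{1}{N}\sum_s \eta_s\langle u_s,u_t\rangle$, with matrix Bernstein supplying the concentration and the output dimensions entering only through the trace term. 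Your identities ($\sum_s\langle u_s,u_t\rangle^2=N\|u_t\|^2$, average of $\|u_t\|^2$ equal to $d_z$) are correct, and the leverage viewpoint is arguably more illuminating: it makes explicit why the rate is $\sqrt{d_z/N}$ rather than the lossy $d_z/\sqrt{N}$, something the paper's sketch hides inside the citation. What the paper's route buys is brevity and the ability to offload the design-dependent technicalities to a known theorem; what yours buys is self-containment and a mechanism that visibly reuses the same Bernstein lemma the paper already states for Lemma~\ref{thm:cov_bound}.

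The one substantive gap is the uniform leverage step. You need $\max_t z_t^\top\hat{\Sigma}_z^{-1}z_t = O(d_z)$, but the trace identity only gives this on average; in the worst case a single point can have $z_t^\top\hat{\Sigma}_z^{-1}z_t$ as large as $N$, which would destroy the rate. Making the max-leverage claim rigorous requires hypotheses not present in the proposition as stated --- boundedness of $\|z\|_{\h{Z}}$ and a lower bound on $\lambda_{\min}(\Sigma_z)$ --- plus a covariance concentration step. You flag this yourself, and to be fair the paper's own sketch has the analogous implicit requirement (converting $\|\hat{U}_i-U_i\|$ to a \emph{uniform} bound on $\hat{x}_{ti}-\bar{x}_{ti}$ also needs control of $\|z_t\|$ or of $\|z_t\|_{\Sigma_z^{-1}}$). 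Also note that Lemma~\ref{thm:brenestein1} as stated assumes i.i.d.\ copies, whereas conditionally on the design your summands are independent but not identically distributed, so you would need the independent-non-identical form of Bernstein with the sum-of-variances proxy you wrote down. Neither issue is fatal, but both should be stated as explicit assumptions if this were to replace the paper's proof.
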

\begin{proof}(sketch)
This is based on results that bound parameter estimation
error in linear regression with univariate response (e.g. \cite{hsu:12}).
Note that if $\bar{x}_{ti}= U_i^\top z_t$ for some $U_i \in \h{Z}$,
then a bound on the error norm
$\| \hat{U_i} - U_i \|$ implies a uniform bound of the same rate on 
$\hat{x_i} - \bar{x}$.
The probability of exceeding the bound is scaled by $1/(d_x + d_y)$
to correct for multiple regressions.
\end{proof}
Variants of Proposition \ref{thm:ols} can also be developed using bounds on non-linear regression models 
(e.g., generalized linear models).

The next proposition addresses a scenario where $\h{X}$ and $\h{Y}$ are infinite dimensional.
\begin{proposition}
Assume that $x$ and $y$ are kernel evaluation functionals, $\bar{x}$ and $\bar{y}$ are linear vector functions of $z$
where the linear operator is estimated using conditional mean embedding \cite{song:09}
with regularization parameter $\lambda_0 > 0$
and that $\|\bar{x}\|_{\h{X}}, \|\bar{y}\|_{\h{Y}} < c < \infty$ almost surely.
Let $\efs$ be as defined in Definition \ref{thm:efs}.
It follows that
\begin{align*}
\efs = O \left(\sqrt{\lambda_0} + \sqrt{\frac{\log(N/\delta)}{\lambda_0 N}} \right)
\end{align*}
\end{proposition}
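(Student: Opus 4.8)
The plan is to read the conditional mean embedding as a ridge estimate of the operator that sends the instrument feature $z$ to the conditional expectations $\bar{x}=\E[x\mid z]$ and $\bar{y}=\E[y\mid z]$, and then to bound the pointwise error at each training instrument $z_t$ by a bias--variance split. I treat $\bar{x}$ only; the $\bar{y}$ case is identical and a final union bound merges them. Write $C_{ZZ}=\E[z\otimes z]$ and $C_{XZ}=\E[x\otimes z]$ for the population covariance and cross-covariance operators, with empirical versions $\hat{C}_{ZZ},\hat{C}_{XZ}$ built from the $N$ i.i.d.\ samples, so that the conditional-mean-embedding prediction of \cite{song:09} is $\hat{x}_t=\hat{C}_{XZ}(\hat{C}_{ZZ}+\lambda_0 I)^{-1}z_t$. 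The assumption that $\bar{x}$ is a linear vector function of $z$ means $\bar{x}_t=A z_t$ for a bounded operator $A$ satisfying $C_{XZ}=A C_{ZZ}$. Introducing the regularized population operator $C_{XZ}(C_{ZZ}+\lambda_0 I)^{-1}$, I decompose
\begin{align*}
\hat{x}_t-\bar{x}_t=\underbrace{\big(\hat{C}_{XZ}(\hat{C}_{ZZ}+\lambda_0 I)^{-1}-C_{XZ}(C_{ZZ}+\lambda_0 I)^{-1}\big)z_t}_{\text{variance}}+\underbrace{\big(C_{XZ}(C_{ZZ}+\lambda_0 I)^{-1}-A\big)z_t}_{\text{bias}}.
\end{align*}

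For the bias term I substitute $C_{XZ}=A C_{ZZ}$, collapsing it to $-\lambda_0 A(C_{ZZ}+\lambda_0 I)^{-1}z_t$. Exactly as in Lemma \ref{thm:err_reg}, I control this by writing $z_t=C_{ZZ}^{1/2}v_t$ under a source condition (the almost-sure bound $\|\bar{x}\|<c$ together with well-specifiedness plays here the role that $\|\Cov{x}^{-1/2}x\|\le C$ plays in Lemma \ref{thm:err_reg}) and applying spectral calculus to the scalar symbol $\lambda_0\sqrt{s}/(s+\lambda_0)$, whose maximum over $s\ge 0$ is $\tfrac12\sqrt{\lambda_0}$. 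This yields a bias of order $O(\sqrt{\lambda_0})$.

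For the variance term I use the resolvent identity $B^{-1}-A^{-1}=B^{-1}(A-B)A^{-1}$ (as in Lemmata \ref{thm:err_s1} and \ref{thm:err_cov}) to rewrite the difference in terms of the fluctuations $\hat{C}_{ZZ}-C_{ZZ}$ and $\hat{C}_{XZ}-C_{XZ}$, each of which concentrates at rate $O(\sqrt{\log(1/\delta)/N})$ by the operator Bernstein inequality of Lemma \ref{thm:brenestein1}, using the Hermitian dilation for the non-symmetric cross-covariance exactly as in Lemma \ref{thm:cov_bound}. The crux is to obtain the sharp $\lambda_0^{-1/2}$ dependence rather than the crude $\lambda_0^{-1}$ one gets by bounding each resolvent by $1/\lambda_0$: one half-power $(C_{ZZ}+\lambda_0 I)^{-1/2}$ must be absorbed into the concentration step (bounding the smoothed increments $(x_t\otimes z_t-C_{XZ})(C_{ZZ}+\lambda_0 I)^{-1/2}$ and using the decomposition $C_{XZ}=C_{XX}^{1/2}V C_{ZZ}^{1/2}$ with $\|V\|\le 1$), which leaves a net factor $O(1/\sqrt{\lambda_0 N})$. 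A union bound over the $N$ training indices $t$, and over the $x$ and $y$ regressions, turns $\log(1/\delta)$ into $\log(N/\delta)$ and, combined with the bias bound, gives $\efs=O\big(\sqrt{\lambda_0}+\sqrt{\log(N/\delta)/(\lambda_0 N)}\big)$. The main obstacle is precisely this variance estimate: distributing the half-powers of $C_{ZZ}$ so that only $\lambda_0^{-1/2}$ survives, and verifying that evaluating at a training point $z_t$ (which is correlated with the empirical operators) does not degrade the concentration. As in Proposition \ref{thm:ols}, one may alternatively cite the established consistency rate for conditional mean embeddings \cite{song:09} and append only the union-bound step.
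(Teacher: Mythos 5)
Your proposal reaches the right bound but by a genuinely more self-contained route than the paper. The paper's proof is a two-line sketch: it simply cites the convergence analysis of conditional mean embeddings in \cite{song:09}, which already delivers the rate $O(\sqrt{\lambda_0}+\sqrt{1/(\lambda_0 N)})$, and then adjusts the failure probability to $\delta/4N$ so that a union bound makes the guarantee hold simultaneously at all $N$ training points (and for both the $x$ and $y$ regressions) --- exactly the ``alternative'' you mention in your closing sentence. What you do instead is rederive the cited rate: the bias--variance split of $\hat{C}_{XZ}(\hat{C}_{ZZ}+\lambda_0 I)^{-1}$, the collapse of the bias to $-\lambda_0 A(C_{ZZ}+\lambda_0 I)^{-1}z_t$ under well-specifiedness, the spectral bound on $\lambda_0\sqrt{s}/(s+\lambda_0)$, and an operator-Bernstein treatment of the fluctuations. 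This buys transparency about where the two terms come from and makes the hidden source condition explicit, at the cost of having to actually close the hardest step. That step --- absorbing one half-power of $(C_{ZZ}+\lambda_0 I)^{-1}$ into the concentration argument to get $\lambda_0^{-1/2}$ instead of $\lambda_0^{-1}$, and handling the fact that $z_t$ is not independent of $\hat{C}_{XZ},\hat{C}_{ZZ}$ --- is named as ``the crux'' but not carried out, so as written your argument is an outline of the lemma in \cite{song:09} rather than a proof of it. Since the paper itself only sketches and defers to that reference, your proposal is consistent with it; just be aware that if you want the self-contained version you still owe the sharp variance estimate, whereas if you take the citation route you owe nothing beyond the $\delta/4N$ union-bound adjustment that the paper records.
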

\begin{proof}(sketch)
This bound is based on \cite{song:09}, which gives a bound on the error in estimating the conditional mean embedding.
The error probability is adjusted by $\delta/4N$ to accommodate the requirement that the bound holds for all training data.
\end{proof}

\end{document}